\renewcommand{\v}[1]{\ensuremath{\boldsymbol{#1}}}
\newtheorem{theorem}{Theorem}
\newtheorem{lemma}{Lemma}
\newtheorem{proposition}{Proposition}
\newtheorem{remark}{Remark}
\newtheorem{corollary}{Corollary}
\DeclareMathOperator*{\tsum}{\textstyle\sum}
\begin{document}

\title{Provable Privacy Advantages of Decentralized Federated Learning via Distributed Optimization}

\author{Wenrui Yu\IEEEauthorrefmark{1}, Qiongxiu Li\IEEEauthorrefmark{1}\IEEEauthorrefmark{2},~\IEEEmembership{Member, IEEE}, Milan Lopuha\"a-Zwakenberg, Mads Gr\ae sb\o ll Christensen,~\IEEEmembership{Senior Member, IEEE}, and Richard Heusdens,~\IEEEmembership{Senior Member, IEEE}
\thanks{W. Yu is with the CISPA Helmholtz Center for Information
Security, Germany.}
\thanks{Q. Li and M.G. Christensen are with Aalborg University, Denmark.}
\thanks{M. Lopuha\"a-Zwakenberg is with University of Twente, the Netherlands.}
\thanks{R. Heusdens is with Netherlands Defence Academy and Delft University of Technology, the Netherlands.}
\thanks{\IEEEauthorrefmark{1}Equal contribution,~~\IEEEauthorrefmark{2}Corresponding author: Qiongxiu Li.}
\thanks{Emails: wenrui.yu@cispa.de; qili@es.aau.dk; m.a.lopuhaa@utwente.nl; mgc@es.aau.dk; r.heusdens@tudelft.nl}}



\maketitle
\raggedbottom
\addtolength{\abovedisplayskip}{-1.0mm}
\addtolength{\belowdisplayskip}{-1.0mm}

\begin{abstract}
Federated learning (FL) emerged as a paradigm designed to improve data privacy by enabling data to reside at its source, thus embedding privacy as a core consideration in FL architectures, whether centralized or decentralized. Contrasting with recent findings by Pasquini et al., which suggest that decentralized FL does not empirically offer any additional privacy or security benefits over centralized models, our study provides compelling evidence to the contrary. We demonstrate that decentralized FL, when deploying distributed optimization, provides enhanced privacy protection - both theoretically and empirically - compared to centralized approaches.  The challenge of quantifying privacy loss through iterative processes has traditionally constrained the theoretical exploration of FL protocols.  We overcome this by conducting a pioneering in-depth information-theoretical privacy analysis for both frameworks.  Our analysis, considering both eavesdropping and passive adversary models, successfully establishes bounds on privacy leakage. In particular, we show information theoretically that the privacy loss in decentralized FL is upper bounded by the loss in centralized FL. Compared to the centralized case where local gradients of individual participants are directly revealed,  a key distinction of optimization-based decentralized FL is that the relevant information includes differences of local gradients over successive iterations and the aggregated sum of different nodes' gradients over the network. This information complicates the adversary's attempt to infer private data. To bridge our theoretical insights with practical applications,  we present detailed case studies involving logistic regression and deep neural networks. These examples demonstrate that while privacy leakage remains comparable in simpler models, complex models like deep neural networks exhibit lower privacy risks under decentralized FL. Extensive numerical tests further validate that decentralized FL is more resistant to privacy attacks,  aligning with our theoretical findings.
\end{abstract}

\begin{IEEEkeywords}
Federated learning, privacy preservation, information theory, distribution optimization, ADMM, PDMM.
\end{IEEEkeywords}

\section{Introduction}
\label{sec:intro}
Federated Learning (FL) enables collaborative model training across multiple participants/nodes/clients without directly sharing each node's raw data~\cite{mcmahan2017communication}. FL can operate on either a centralized/star topology or a decentralized topology, as shown in \autoref{fig:topo}~\cite{li2020federated}. The prevalent centralized topology requires a central server that interacts with each and every node individually.
The main procedure of a centralized FL protocol typically unfolds in three steps: 1) Nodes train local models based on their own private dataset and transmit model updates, such as gradients, to the server; 2) The server aggregates the local models to a global model and redistributes to the nodes; 3) Nodes update the local models based on the global model and send the model updates back to the server. The process is iteratively repeated until convergence. 
However, a centralized server is not always feasible due to its high communication demands and the need for universal trust from all nodes. In addition, it poses a risk of a single point of failure, making the network vulnerable to targeted attacks. As an alternative,  decentralized FL circumvents these issues by facilitating direct data exchanges between (locally) connected nodes, thereby eliminating the need for a central server for model aggregation.  

Decentralized FL protocols, also known as peer-to-peer learning protocols, fall into two main categories. The first involves average-consensus-based protocols. With these protocols, instead of sending model parameters to a central server, nodes collaborate together to perform model aggregation nodes in a distributed manner. The aggregation is typically done by partially averaging the local updates within a node's neighborhood.  Examples of these protocols are the empirical methods where the aggregation is done using average consensus techniques such as gossiping SGD~\cite{jin2016scale}, D-PSGD~\cite{lian2017can}, and variations thereof~\cite{tang2018d,hu2019decentralized}. 
 The second category comprises protocols that are based on distributed optimization, referred to as optimization-based decentralized FL. These (iterative) methods 
directly formulate the underlying problem as a constrained optimization problem and employ distributed solvers like ADMM~\cite{mota2013d,li2019communication,chen2021coded} or PDMM~\cite{zhang2017distributed,sherson2018derivation,niwa2020edge} to solve them. The constraints are formulated in such a way that, upon convergence, the learned models at all nodes are identical. Hence, there is no explicit separation between updating local models and the update of the global model, i.e., the three steps in centralized FL mentioned before are executed simultaneously. 

Despite not directly sharing private data with servers or nodes,  FL is shown vulnerable to privacy attacks as the exchanged information, such as gradients or weights, still poses a risk for privacy leakage.
Existing work on privacy leakage predominantly focuses on the centralized case. 
A notable example is the gradient inversion attack~\cite{zhu2019deep,zhao2020idlg,geiping2020inverting,yin2021see,boenisch2021curious,geng2023improved,wei2020framework,yang2022using,zhao2022deep,xu2022agic}, an iterative method for finding input data that produce a gradient similar to the gradient generated by the private data. Such attacks are based on the assumption that similar gradients are produced by similar data samples.  
\begin{figure}[t]
    \centering
    \includegraphics[width=0.40\textwidth]{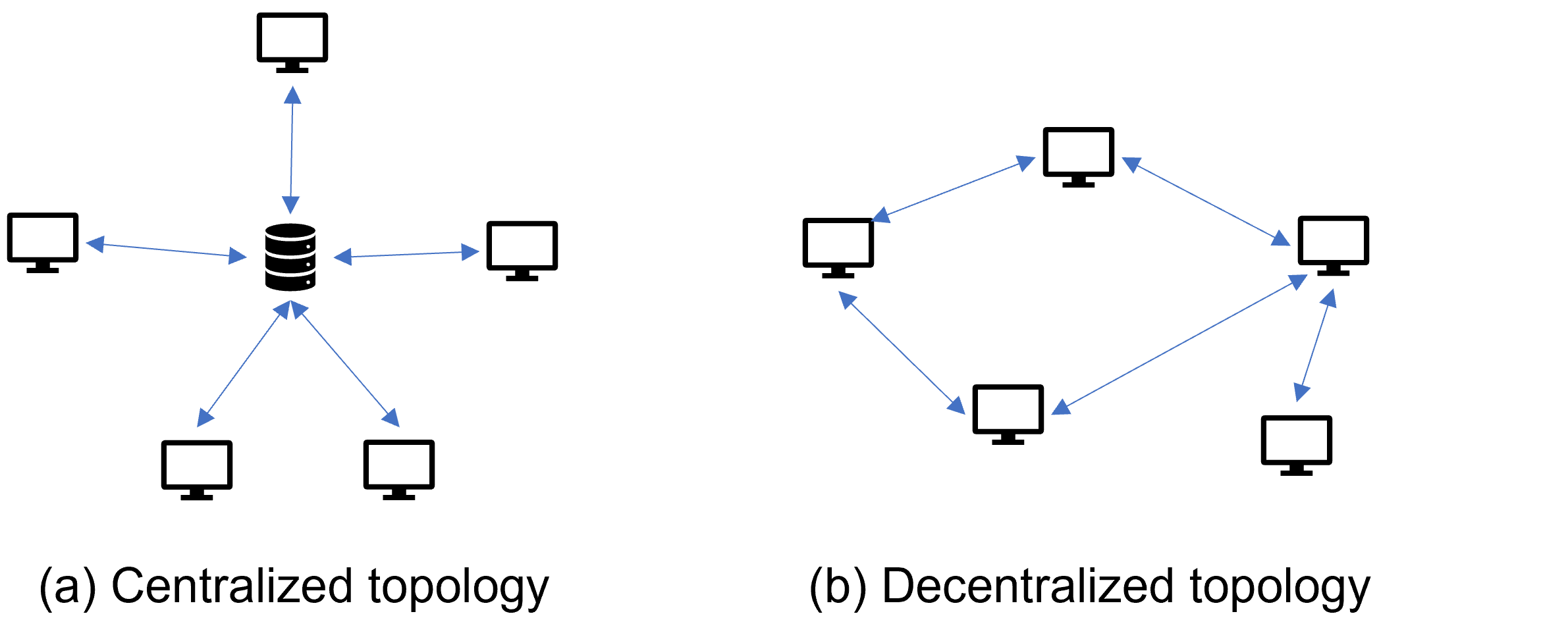}
    \caption{Two topologies in federated learning}
    \label{fig:topo}
\end{figure}

In exploring the privacy aspects of centralized versus decentralized FL, many works claim that the decentralized FL is more privacy-preserving than centralized FL without any privacy argument~\cite{yuan2016convergence,cheng2019towards,vogels2021relaysum}. The main idea is that sensitive information, such as private data, model weights, and user states, can no longer be observed or controlled through a single server. However, recent empirical findings challenge such a claim, particularly for average-consensus-based decentralized FL protocols. It is shown in ~\cite{pasquini2022privacy} that these protocols may not inherently offer better privacy protections over centralized FL, and might even increase susceptibility to privacy breaches. For instance, it has been shown that an arbitrary colluding client could potentially obtain the same amount of information as a central server in centralized FL when inverting input private data via gradients. In contrast, the privacy implications of optimization-based decentralized FL protocols have, to the best of our knowledge, been rarely investigated.  This research gap is partly due to the fact that analytically tracking the privacy leakage in distributed algorithms, particularly over multiple iterations, is very challenging. The main difficulty lies in distinguishing and comprehending how information is correlated between these iterations. 
\subsection{Paper contribution}
In this paper, we take the first step to perform a theoretical privacy analysis of both centralized and decentralized FL frameworks by analyzing the information flow within the network. Our key contributions are summarized below:
\begin{itemize}[itemsep=2pt,topsep=0pt,parsep=0pt]
    \item \emph{Analytical privacy bounds of decentralized FL}:  We conduct an information-theoretical privacy analysis of both centralized and decentralized FL protocols,  using mutual information as a key metric.   To the best of our knowledge, this is the first information-theoretical privacy analysis in this context.   Notably,  we derive two privacy bounds for the optimization-based decentralized FL and show that its privacy loss is upper bounded by the loss of centralized FL (Theorem \ref{thm_pas2} in Section \ref{sec:bound}).  We further exemplify the derived privacy gap through two applications, including logistic regression and Deep Neural Networks (DNNs).
    \item \emph{Empirical validation through privacy attacks}:
For DNN applications, we show that in the case of optimization-based decentralized FL, gradient inversion attacks can be applied to reconstruct the original input data, but the reconstruction performance is degraded compared to centralized FL due to the limited amount of information available to the adversary. A similar trend is also observed when evaluated using membership inference attacks. 
Overall, decentralized FL employing distributed optimization is shown to be less vulnerable to privacy attacks compared to centralized FL, consistent with our theoretical findings.  This finding challenges the previous belief that decentralized FL offers no privacy advantages compared to centralized FL~\cite{pasquini2022privacy} (see Section \ref{ssec.epfl} for a detailed explanation). 
\end{itemize}
\subsection{Outline and Notation}
The paper is organized as follows. Section \ref{sec:prelim} reviews necessary fundamentals, and Section \ref{sec.priMetric} introduces the involved metrics for quantifying privacy.  Section \ref{sec:prop} introduces the optimization-based decentralized FL protocol.  Section \ref{sec:bound} analyzes the privacy of both centralized and decentralized FL protocols and states the main result. 
Section \ref{sec:experimentI} analyzes logistic regression example. Section \ref{sec:experimentII}, \ref{sec:DNNII}
analyze the application of DNNs. Conclusions are given in Section \ref{sec:conclusion}. 

We use bold lowercase letters to denote vectors $\v x$ and bold uppercase letters for matrices $\v X$. Calligraphic letters $\mathcal{X}$ denotes sets. The $i$th entry of a vector $\v x$ is denoted $\v x_i$.  The superscript $(\cdot)^\intercal$ denotes matrix transposition. $\v I$ is used to denote the identity matrix of appropriate dimensions. $\v 0$ and $\v 1$ are the all-zero and all-one vectors. $\nabla$ denotes the gradient. The value of the variable $\v x$ at iteration $t$ is denoted as  $\v x^{(t)}$. 
We use $\|\cdot\|$  to indicate the $\ell_2$-norm and $\mathrm{ran}(\cdot)$ and $\mathrm{ker}(\cdot)$ to denote the range and kernel of their argument, respectively.
For the sake of notational simplicity, we represent random variables using capital letters,  regardless of whether its outcome is a scalar, a vector, or a matrix. 
$M\left(\boldsymbol{x}, i\right):=\mathbbm{1}\{(\cdot, \v x)\in \mathcal{D}\}$ is the indicator variable showing whether $\v x$ is in the dataset $\mathcal{D}$.

\section{Preliminaries}
\label{sec:prelim}
This section reviews the necessary fundamentals for the remainder of the paper. 
\subsection{Centralized FL}
Without loss of generality, we focus on classification problems involving $n$ nodes, each with its local dataset $\{(\v x_{ik}, \ell_{ik}): k=1,\ldots,n_i\}$,  where $\v x_{ik} \in \mathbb{R}^v$ represents an input sample,  $\ell_{ik} \in \mathbb{R}$ is the associated label and $n_i$ is the number of input samples. The dimension $v$ of the data samples is application-dependent. Collecting the $\v x_{ik}$s and $\ell_{ik}$s, we define $\v x_i=(\v x^\intercal_{i1},\ldots,\v x^\intercal_{i n_i})^\intercal$ and $\v \ell_i=(\ell_{i1},\ldots,\ell_{i n_i})^\intercal$.  Let $f_i(\v w_i,(\v x_i,\v \ell_i))$ denote the cost function of node $i$ where $\v w_i\in\mathbb{R}^u$ is the model weight to be learned from the input dataset $(\v x_i,\v \ell_i)$, whose dimension, again, depends on the application. In the remainder of the paper we will omit the $(\v x_i,\v \ell_i)$ dependency for notational convenience when it is clear from the context and simply write $f_i(\v w_i)$.
A typical centralized FL protocol works as follows: 
\begin{enumerate}
    \item Initialization: at iteration  $t=0$, the central server randomly initializes the weights $\v w_i^{(0)}$ for each node.
    \item Local model training: at each iteration $t$, each user $i$ first receives the model updates from the server and then computes its local gradient, denoted as $\nabla f_i(\v w_i^{(t)})$, using its local data $\v x_i$.
    \item Model aggregation: the server collects these local gradients and performs aggregation to update the global model. The aggregation is often done by weighted averaging and typically uniform weights are applied, i.e., $\frac{1}{n} \tsum_{i=1}^{n} \nabla f_i(\v w_i^{(t)})$. Subsequently,  each node $i$ then updates its own model weight  by
\begin{align}\label{eq.w_ave}
   \v  w_i^{(t+1)}=\v w_i^{(t)}- \frac{\mu}{n} \tsum_{i=1}^{n} \nabla f_i(\v w_i^{(t)}),
\end{align}
where $\mu$ is a constant controlling the convergence rate. 
The last two steps are repeated until the global model converges or until a predetermined stopping criterion is reached.
\end{enumerate}
This algorithm is often referred to as the FedAvg~\cite{mcmahan2017communication}.

\subsection{Decentralized FL}\label{ssec.dfl}
Decentralized FL works for cases where a trusted centralized server is not available. In such cases, it works on a so-called distributed network which is often modeled as an undirected graph: $\mathcal{G}=(\mathcal{V},\mathcal{E})$, with $\mathcal{V}={\{1,2,...,n}\}$ representing the node set and $\mathcal{E}\subseteq \mathcal{V}\times \mathcal{V}$ representing the edge set. $\mathcal{N}_i=\{j\,|\, {(i,j)\in \mathcal{E}\}}$ denotes the set of neighboring nodes of node $i$. In this decentralized setup, each node $i$ can only communicate with its neighboring nodes $j \in {\cal N}_i$, facilitating peer-to-peer communication without any centralized coordination.  

\subsubsection{Average consensus-based approaches}\label{sssec:avgDFL}
The model aggregation step requires all nodes' local gradients. 
Many decentralized FL protocols work by deploying distributed average consensus algorithms to compute the average of local gradients, i.e., computing $\frac{1}{n} \tsum_{i=1}^{n} \nabla f_i(\v w_i^{(t)})$ in \cref{eq.w_ave} without any centralized coordination. Example average consensus algorithms are gossip ~\cite{dimakis2010gossip} and linear iterations~\cite{olshevsky2009convergence}, which allow peer-to-peer communication over distributed networks.

The common decentralized FL often works similarly to the FedAvg algorithm, except for the step of model aggregation. For instance, D-PSGD~\cite{lian2017can,koloskova2020unified} uses
gossip averaging with neighbors to implement the aggregation, i.e., 
\begin{align}\v w_i^{(t+1)}=\v w_i^{(t)}-\frac{\mu}{d_i}\tsum_{j\in \mathcal{N}_i}\nabla f_j(\v w_j^{(t)}),
\end{align}
where $d_i=|\mathcal{N}_i|$ is the degree of node $i$.

\subsubsection{Distributed optimization-based approaches} \label{sssec:optDFL}
The goal of optimization-based decentralized FL is to collaboratively learn a global model, given the local datasets $\{(\v x_i, \v \ell_i): i\in \cal V\}$, without any centralized coordination. The underlying problem  can be posed as a constrained optimization problem given by
\begin{align} \label{eq.pmFor}
\begin{array}{ll}
{\min \limits_{\big\{\v w_i \,:\, i\in \cal V\big\}}} &\tsum\limits_{i \in \mathcal{V}} f_i(\v w_i), \\
\text{subject to} &\forall (i,j) \in {\cal E}:{\v{B}_{i\mid j}}{{\v w}_i} + {\v{B}_{j\mid i}}{{\v w}_j} = \v 0, \rule[4mm]{0mm}{0mm}
\end{array}
\end{align}
where $\v B_{i\mid j}$ and $\v B_{i\mid j}$ define linear edge constraints. To ensure all nodes share the same model at convergence (consensus constraints) we have $\v B_{i\mid j} = -\v B_{j\mid i} = \pm \v I$.
 In the following, we will use the convention that $ \v B_{i|j} = \v I$ if  $i < j$ and $ \v B_{i|j} = -\v I$.

In what follows we will refer to centralized FL as CFL. 
While decentralized FL encompasses both average consensus-based and distributed optimization-based approaches (recall Section \ref{ssec.dfl}), for simplicity, we will use the abbreviation DFL to specifically refer to the optimization-based decentralized FL as it is our main focus. We will differentiate between the two methods in contexts where such distinction is necessary to avoid confusion.

\subsubsection{Distributed optimizers}
Given the optimization problem \cref{eq.pmFor}, many distributed optimizers, notably ADMM~\cite{boyd2011distributed} and PDMM~\cite{zhang2017distributed,sherson2018derivation} have been proposed. 
From a monotone operator theory perspective~\cite{ryu2016primer,sherson2018derivation}, ADMM can be seen as a $\frac{1}{2}$-averaged version of PDMM, allowing both to be analyzed within the same theoretical framework. Due to the averaging, ADMM is generally slower than PDMM, assuming it converges. 
Both ADMM and PDMM solve the optimization problem \cref{eq.pmFor} iteratively, with the update equations for node $i$ given by:
\begin{align}
&\v w_{i}^{(t)} = \arg\min_{\v w_i} \big(
 f_{i}(\v w_{i})+ \tsum_{j \in \mathcal{N}_i} \v z_{i|j}^{(t)\intercal} \v B_{i|j}\v w_i + \frac{\rho d_i}{2}\v w_{i}^2 \big), \label{eq.xupNQ}
 \\
    &\forall j \in \mathcal{N}_i: \v z_{j|i}^{(t+1)}=(1-\theta) \v z_{j|i}^{(t)}+\theta \big(\v z_{i|j}^{(t)}+2\rho \v B_{i|j}\v w_i^{(t)}\big), \label{eq.zupNQ} 
\end{align}
where $\rho$ is a constant controlling the rate of convergence. The parameter $\theta \in (0,1]$ controls the operator averaging with $\theta=\frac{1}{2}$ (Peaceman-Rachford splitting) yielding ADMM and $\theta=1$ (Douglas-Rachford splitting) leading to PDMM. $\v z$ is called auxiliary variable having entries indicated by $\v z_{i|j}$ and $\v z_{j|i}$, held by node $i$ and $j$, respectively, related to edge $(i,j) \in \mathcal{E}$. \cref{eq.xupNQ} updates the local variables (weights) $\v w_i$, whereas \cref{eq.zupNQ} represents the exchange of data in the network through the auxiliary variables $\v z_{j|i}$. 

\subsection{Threat models}
We consider two types of adversary models: the eavesdropping and the passive (also known as honest-but-curious) adversary model.  While eavesdropping can typically be addressed through channel encryption \cite{dolev1993perfectly}, it remains a pertinent concern in our context. This relevance stems from the nature of iterative algorithms, where communication channels are utilized repeatedly, continuously encrypting each and every message incurs high communication overhead.  Therefore, in our framework, we assume that network communication generally occurs over non-secure channels, except for the initial network setup phase, details of which will be discussed later (see Section \ref{sec:prop}).
The passive adversary consists of a number of colluding nodes, referred to as corrupt nodes, which comply with the algorithm instructions but utilize the received information to infer the private input data of the other so-called honest nodes. Consequently,  the adversary has access to the following information: (a) all information gathered by the corrupt nodes, and (b) all messages transmitted over unsecured (i.e., non-encrypted) channels. 

\section{Privacy evaluation}\label{sec.priMetric}
When quantifying privacy, there are mainly two types of metrics: 1) empirical evaluation which assesses the susceptibility of the protocol against established privacy attacks, and  2) information-theoretical metrics which offer a robust theoretical framework independent of empirical attacks.
In this paper, we first evaluate privacy via an information-theoretical metric and then deploy empirical attacks to validate our theoretical results. 
\subsection{Information-theoretical privacy metric}
Among the information-theoretical metrics, popular ones include for example 
1) $\epsilon$-differential privacy \cite{dwork2006,dwork2006calibrating} which guarantees that the posterior guess of the adversary relating to the private data is only slightly better (quantified by $\epsilon$) than the prior guess; 2) mutual information~\cite{cover2012elements} which quantifies statistically how much information about the private data is revealed given the adversary's knowledge.
In this paper, we choose mutual information as the information-theoretical privacy metric. The main reasons are the following. Mutual information has been proven effective in measuring privacy losses in distributed settings~\cite{Jane2020TIFS}, and has been applied in various applications~\cite{lopuhaa2019information, Jane2020GSP,yagli2020information,bu2020tightening,li2023adaptive,liu2021quantitative,mo2020layer}. Secondly, mutual information is intrinsically linked to $\epsilon$-differential privacy (see~\cite{cuff2016differential} for more details) and is more feasible to realize in practice~\cite{gotz2011publishing,haeberlen2011differential}. 

\subsubsection{Fundamentals of mutual information}
Given two (discrete) random variables $X$ and $Y$, the mutual information $I(X;Y)$  between $X$ and $Y$ is defined as
\begin{align}\label{eq:MI_def}
   I (X;Y)=H(X) - H(X|Y),
\end{align}
where $H(X)$ represents the Shannon entropy of $X$ and $H(X|Y)$ is the conditional Shannon entropy, assuming they exist.\footnote{In cases of continuous random variables we substitute both entropies by the differential entropy, thus $I(X;Y) = h(X) - h(X|Y)$.} It follows that $ I (X;Y)=0$ when $X$ and $Y$ are independent, indicating that $Y$ carries no information about $X$. Conversely, $ I (X;Y)$ is maximal when $Y$ and $X$ share a one-to-one correspondence.

Denote $\mathcal{V}_h$ and $\mathcal{V}_c$ as the set of honest and corrupt nodes, respectively. Let $\mathcal{O}$ denote the set of information obtained by the adversary. Hence, the privacy loss, measured by the mutual information between the private data $\v x_i$ of honest node $i\in \mathcal{V}_h$ and the knowledge available to the adversary, is given by
\begin{align}\label{eq.miAdv}
     I (X_i;\mathcal{O}).
\end{align}

\subsection{Empirical evaluation via privacy attacks}
To complement our theoretical analysis,  we incorporate empirical privacy attacks to validate our findings. In machine learning, based on the nature of the disclosed private data, privacy breaches typically fall into three types: membership inference~\cite{xu2020subject,melis2019exploiting,wainakh2021user}, the property inference ~\cite{melis2019exploiting,shokri2017membership} and the input reconstruction attack~\cite{he2019model,wang2019beyond,yang2019neural,zhang2020secret}, where the revealed information is membership (whether a particular data sample belongs to the training dataset or not), properties of the input such as age and gender, and the input training data itself, respectively. Given that \cref{eq.miAdv} measures how much information about the input training data is revealed, we align our empirical evaluation by mainly focusing on input reconstruction attacks. In FL, the gradient inversion attack has been extensively studied for its effectiveness in reconstructing input samples.
\subsubsection{Gradient inversion attack} \label{subsec.labelRecov}
The gradient inversion attack typically works by iteratively refining an estimate of the private input data to align with the observed gradients generated by such data.   For each node's local dataset $(\v x_i,\v \ell_i)$, the goal of the adversary is to recover the input data $(\v x_i,\v \ell_i)$ based on the observed gradient. A typical setup is given by~\cite{zhu2019deep}:
\begin{align}\label{eq.traInv}
(\v x_i^{\prime *}, \v \ell_i^{\prime *}) =\underset{\v x_i^{\prime}, \v \ell_i^{\prime}}{\arg \min }\big\|\nabla f_i(\v w_i,(\v x'_i,\v \ell'_i)) - \nabla f_i(\v w_i,(\v x_i,\v \ell_i))\big\|^2,
\end{align}
and many variants thereof are proposed~\cite{zhao2020idlg,geiping2020inverting,
 yin2021see,boenisch2021curious,geng2023improved,wei2020framework,yang2022using,zhao2022deep,xu2022agic}. To evaluate the quality of reconstructed inputs, we use the widely adopted structural similarity index measure (SSIM)~\cite{wang2004image}  to measure the similarity between the reconstructed images and true inputs. The SSIM index ranges from $-1$ to $1$, where $\pm 1$ signifies perfect resemblance and $0$ indicates no correlation.

\textbf{Analytical label recovery via local gradient}: 
While it appears that both the input data $\v x_i$ and its label $\v \ell_i$ in \cref{eq.traInv} require reconstruction through optimization, existing work normally assumes that the label is already known. This is because the label can often be analytically inferred from the shared gradients~\cite{zhao2020idlg, wainakh2021user}.  
The main reason is as follows. Consider a classification task where the neural network has $L$ layers and is trained with cross-entropy loss. Assume $n_i=1$ for simplicity (one data sample at each node). Let $\v y = (y_1,\ldots,y_C)$ denote the outputs (logits), where $y_i$ is the score (confidence) predicted for the $i$th class. 
 With this, the cross-entropy loss over one-hot labels is given by
 \begin{equation}
 f_i(\v w_i) = -\log\big( \frac{e^{y_{\ell_i}}}{\tsum_j e^{y_j}}\big) = \log\big(\textstyle \tsum_j e^{y_j}\big) - y_{\ell_i},
 \label{eq:ce}
 \end{equation}
 where $\log(\cdot)$ denotes the natural logarithm.
 Let $\v w_{i,L,c}$ denote the weights in the output layer $L$ corresponding to output $y_c$. The gradient of $f_i(\v w_i)$ with respect to $\v w_{i,L,c}$ can then be expressed as~\cite{zhao2020idlg}:
\begin{align}\label{eq.label}
   \nabla' f_i(\v w_{i,L,c}) \triangleq \frac{\partial f_i(\v w_{i})}{\partial \v w_{i,L,c}} = \frac{\partial f_i(\v w_{i})}{\partial y_c} \frac{\partial y_c}{\partial \v w_{i,L,c}} = g_c \v a_{L-1},
\end{align}
where  $\v a_{L-1}$ is the activation at layer $L-1$ and $g_c$ is the gradient of the cross entropy \cref{eq:ce} with respect to logit $c$:
\begin{equation}
g_c = \frac{e^{y_c}}{\tsum_j e^{y_j}} - \delta_{c,\ell_i},
\end{equation}
where $\delta_{c, \ell_i}$ is the Kronecker-delta,  defined as \(\delta_{c, \ell_i} = 1\) when \(c = \ell_i\) and \(\delta_{c, \ell_i} = 0\) otherwise.  Consequently, $g_c<0$ for $c=\ell_i$ and $g_c>0$ otherwise.  Since the activation $\v a_{L-1}$ is independent of the class index $c$, the ground-truth label $\ell_i$ can be inferred from the shared gradients since $\nabla'^{\intercal} f_i(\v w_{i,L,\ell_i})\nabla' f_i(\v w_{i,L,c}) = g_{\ell_i} g_{c}\|\v a_{L-1}\|^2 < 0$ for $c\neq \ell_i$ and positive only for $c = \ell_i$. 
When dealing with $n_i>1$,  recovering labels becomes more challenging, yet feasible approaches are available. One example approach shown in~\cite{wainakh2021user} leverages the fact that the gradient magnitude is proportional to the label frequency in untrained models. 
Hence, in the centralized FL case, label information can often be deduced from the shared gradients thus improving both the efficiency and accuracy of the reconstructed input $\v x_i^{\prime *}$ when compared to the real private input $\v x_i$~\cite{zhao2020idlg}. While for the decentralized FL protocol, we will show that the label information cannot be analytically computed for certain cases, thereby inevitably decreasing both the efficiency and reconstruction quality (see details in Remark \ref{rm.graDif}). 

\subsubsection{Membership inference attack} The goal of membership inference attacks is to determine whether a specific data sample is part of the training set of a particular model. We deployed the gradient-based membership inference attacks proposed in \cite{li2022effective} which are tailored for FL and have demonstrated superior performance compared to prior approaches such as the so-called loss-based \cite{yeom2018privacy} and modified entropy-based approaches \cite{song2021systematic}. The gradient-based approach uses cosine similarity between model updates and instance gradients, expressed as follows: 
\begin{align*}
&M\left(\boldsymbol{x}^{\prime}, i\right)\\
&=\tsum_{l' \in \mathbb{R}} \mathbbm{1}\left\{\operatorname{cosim}\left(\nabla f_i(\v w_i,(\v x', l')), \nabla f_i(\v w_i,(\v x_i,\v \ell_i))\right) \geq \gamma \right\}.
\end{align*}
The latter one uses the indicator shown as
\begin{align*}
M\left(\boldsymbol{x}^{\prime}, i\right)&=\mathbbm{1}\{\left\|\nabla f_i(\v w_i,(\v x_i,\v \ell_i))\right\|_2^2 \\
&-\|\nabla f_i(\v w_i,(\v x_i,\v \ell_i))-\sum_{l' \in \mathbb{R}} \nabla f_i(\v w_i,(\v x_i,\v \ell_i))\|_2^2>0\}.
\end{align*} The corresponding results are presented in Section \ref{subsec.mia} and \ref{subsec.mia2}.

\section{DFL using distributed optimizers}\label{sec:prop}
This section introduces distributed solvers considered in this work, explains pivotal convergence properties relevant to subsequent privacy analyses, and gives details of the decentralized protocol using distributed optimization techniques.

\subsection{Differential A/PDMM}\label{subsec:subspace}
The optimality condition for \cref{eq.xupNQ} is given by\footnote{Note that ADMM can also be applied to non-differentiable problems where the optimality condition can be expressed in terms of subdifferentials: $ 0 \in \partial f_i(\v w_i^{(t)}) + \tsum_{j \in \mathcal{N}_i} \v B_{i|j} \v z_{i|j}^{(t)}+ \rho d_i  \v w_i^{(t)}$.}
\begin{align}\label{eq.partialZero}
\;\; \v 0 = \nabla f_i(\v w_i^{(t)}) + \tsum_{j \in \mathcal{N}_i} \v B_{i|j} \v z_{i|j}^{(t)}+ \rho d_i  \v w_i^{(t)}.
\end{align}
Given that the adversary can eavesdrop all communication channels,  by inspection of \cref{eq.partialZero}, transmitting the auxiliary variables $\v z_{j|i}$  would expose $\nabla f_i(\v w_i^{(t)})$, as $\v w_i^{(t)}$ can be determined from \cref{eq.zupNQ}. Encrypting $\v z_{j|i}^{(t)}$ at every iteration would address this, albeit at prohibitive computational expenses. To circumvent this, only initial values $\v z_{j|i}^{(0)}$ are securely transmitted and $\Delta \v z_{j|i}^{(t+1)} = \v z_{j|i}^{(t+1)} - \v z_{j|i}^{(t)}$ being unencrypted in subsequent iterations~\cite{jane2022elsevier,Jane2020TSP}. Consequently, upon receiving $\Delta\v z_{j|i}^{(t+1)}$, $\v z_{j|i}^{(t+1)}$ is reconstructed as
\begin{align}\label{eq.dZ}
\v z_{j|i}^{(t+1)} = \v z_{j|i}^{(t)} + \Delta \v z_{j|i}^{(t+1)}  = \tsum_{\tau=1}^{t+1} \Delta \v z_{j|i}^{(\tau)} + \v z_{j|i}^{(0)}.    
\end{align}
Let $t_{\max}$ denote the maximum number of iteration and denote $\mathcal{T} =\{0,1,\ldots, t_{\max}\}$. 
Hence, eavesdropping only uncovers  
\begin{equation} 
    \big\{\Delta \v z_{j \mid i}^{(t+1)} : (i, j) \in \mathcal{E}, t \in \mathcal{T}\big\},
    \label{eq.eavesdrop}
\end{equation}
and $\v z_{j|i}^{(t+1)}$ remains undisclosed unless  $\v z_{j|i}^{(0)}$ is known. 

\subsection{DFL using differential A/PDMM}
ADMM is guaranteed to converge to the optimal solution for arbitrary convex, closed and proper (CCP) objective functions $f_i$, whereas PDMM will converge in the case of differentiable and strongly convex functions~\cite{sherson2018derivation}. Recently, it has been shown that these solvers are also effective when applied to non-convex problems like training DNNs~\cite{niwa2020edge}.  Note that for complex non-linear applications such as training DNNs, although exact solutions of \cref{eq.xupNQ} are usually unavailable, convergence analysis of approximated solutions has been extensively investigated. For instance, it is shown in~\cite{niwa2020edge} that PDMM, using quadratic approximations,  achieves good performance for non-convex tasks such as training DNNs. Moreover, convergence guarantees with quantized variable transmissions are investigated in~\cite{jonkman2018quantisation}.

Details of DFL using differential A/PDMM solvers are summarized in Algorithm~\ref{alg:pdmm}. Note that at the initialization it requires that each node randomly initialize $z_{i|j}^{(0)}$ from independent distributions having variance $\sigma^2_Z$ and sends it to neighbor $j\in \mathcal{N}_i$ via secure channels, also referred to as the subspace perturbation technique~\cite{Jane2020ICASSP,Jane2020TSP}. The core concept involves introducing noise into the auxiliary variable $\boldsymbol{z}$ to obscure private data from potential exposure, while the convergence of $\v w$ is not affected.
To explain this idea, consider \cref{eq.zupNQ} in a compact form:
\begin{align}
  \v z^{(t+1)}&=(1 -\theta) \v z^{(t)} +\theta\left(\v P\v z^{(t)}+2c\v P\v C \v w^{(t)}\right), \label{eq:zupc}
\end{align}
where $\v C=[\v B_{  +}^{\top},\v  B_{  -}^{\top}]^{\top}$ and $\v  B_{  +}$ and $\v  B_{  -}$ contains the positive and negative entries of $\v B$, respectively. Additionally, $\v P$ is a permutation matrix that interchanges the upper half rows and lower half rows of the matrix it multiplies, leading to $\v P\v C=[ \v B_{  -}^{\top},\v B_{  +}^{\top}]^{\top}$. 
Denote $\Psi = \mathrm{ran}(\v C) + \mathrm{ran}(\v P\v C)$, its orthogonal complement is denoted by $\Psi^{\perp}= \ker(\v C^{\top}) \cap \ker((\v P\v C)^{\top})$. Let $\v\Pi_\Psi$ represent the orthogonal projection. We can then decompose $ \v z$ into components within $\Psi$ and $\Psi^{\perp}$ as
$\v z^{(t)}=\v z_{  \Psi}^{(t)}+\v z_{  \Psi^\perp}^{(t)}$.
Note that the component $\v z_{ \Psi^\perp}^{(t)}$ is not null, requiring that the number of edges should be no smaller than the number of nodes. This condition is, however, not met in CFL with a star topology. Thus even though we deploy ADMM or PDMM for CFL, it would not give any privacy benefit.

It has been proven in~\cite{jane2022elsevier} that 
\begin{align*}
\v z_{  \Psi^\perp}^{(t)} = \frac{1}{2}\left(\v z_{  \Psi^\perp}^{(0)} + \v P\v z_{  \Psi^\perp}^{(0)}\right) + \frac{1}{2}(1-2\theta)^t\left(\v z_{  \Psi^\perp}^{(0)} - \v P\v z_{  \Psi^\perp}^{(0)}\right).
\end{align*}
Thus, for a given graph structure and $\theta$, $\v z_{  \Psi^\perp}^{(t)}$  depends solely on the initialization of the  auxiliary variable $\v z^{(0)}$. Consequently, if  $\v z_{i|j}^{(0)}$ is not known by the adversary, so does  $\v z_{i|j}^{(t)}$ for subsequent iterations.   This is key that privacy advantages can be provided when compared to centralized FL (in Remark \ref{rm.secChannel} we will analyze the privacy loss for CFL when applying a similar trick).  

\begin{algorithm}[t]
  \caption{Decentralized FL via A/PDMM}
  \label{alg:pdmm}
  \begin{algorithmic}
      \State Each node $i$ randomly initializes $z_{i|j}^{(0)}$ from independent distributions having variance $\sigma^2_Z$ and sends to neighbor $j\in \mathcal{N}_i$ via secure channels. 
          \For{$t=0,1,...$} 
            \For{each node $i \in \mathcal{V}$ \textbf{in parallel}} 
                \State $\v w_i^{(t)}=$
                \vspace{-.3\baselineskip}
                \State $\displaystyle \quad\arg \min _{\v w_i}\big(f_i\big(\v w_i\big)+\tsum_{j \in \mathcal{N}_i} \v z_{i \mid j}^{(t) \boldsymbol{\top}} \v B_{i \mid j} \v w_i+\frac{\rho d_i}{2} \v w_i^2\big)$
                \For{each $j \in \mathcal{N}_i$}
                    \State $\v z_{j \mid i}^{(t+1)}=(1-\theta) \v z_{j \mid i}^{(t)}+\theta\big(\v z_{i \mid j}^{(t)}+2 \rho \v B_{i \mid j} \v w_i^{(t)}\big)$
                    \State  $\Delta \v z_{j \mid i}^{(t+1)}=\v z_{j \mid i}^{(t+1)}-\v z_{j \mid i}^{(t)}$
                \EndFor
            \EndFor
            \For{each $i \in \mathcal{V}$, $j \in \mathcal{N}_i$}
                \State $\text{Node}_j \leftarrow \text{Node}_i(\Delta \v z_{j \mid i}^{(t+1)})$
            \EndFor
           \For{each $i \in \mathcal{V}$, $j \in \mathcal{N}_i$}
                \State $\v z_{j \mid i}^{(t+1)}= \v z_{j \mid i}^{(t)} + \Delta \v z_{j \mid i}^{(t+1)}$
            \EndFor
      \EndFor
  \end{algorithmic}
\end{algorithm}

\section{Privacy analysis} \label{sec:bound} 
In this section, we conduct the comparative analysis of privacy loss in both CFL and DFL protocols, specifically focusing on the FedAvg algorithm and the decentralized approach introduced in Algorithm~\ref{alg:pdmm}. For simplicity, we will primarily consider the case $\theta=1$, i.e., PDMM, but the results can be readily extended to arbitrary $\theta\in(0,1]$.

\subsection{Privacy loss of CFL}
In CFL, the transmitted messages include the initial model weights $\v w_j^{(0)}$, and the local gradients $\nabla f_j(\v w_j^{(t)})$ at all iterations $t\in \mathcal{T}$ of all nodes $j\in{\cal V}$.  Hence, by inspection of \cref{eq.w_ave}, we conclude that knowledge of local gradients and initial weights $\v w _j^{(0)}$ is sufficient to compute all updated model weights $\v w_j^{(t)}$ at every $t\in \mathcal{T}$. 
Hence, the eavesdropping adversary has the following knowledge
\begin{align}\label{eq.knowCFLe}
    \{\nabla f_j(\v w_j^{(t)}),\v w_j^{(t)}\}_{j\in \mathcal{V},t\in \mathcal{T}}.
\end{align}
The passive adversary, on the other hand, has the following knowledge
\begin{align}\label{eq.knowCFLp}
    \{\v x_j,\v w_j^{(t)},\nabla f_j(\v w_j^{(t)})\}_{j\in \mathcal{V}_c, t \in \mathcal{T}}.
\end{align}
Combining both sets, the privacy loss, quantified  by the mutual information between the private data $\v x_i$ and the knowledge available to the adversary (as in \cref{eq.miAdv}), is given by
\begin{align} \label{eq.miCFLp}
& I (X_i;\mathcal{O}_{\rm {CFL}})\nonumber\\
&=
I( X_i; \{X_j\}_{j\in \mathcal{V}_c},\{\nabla f_j(W_j^{(t)}),W_j^{(t)}\}_{j\in \mathcal{V},t\in \mathcal{T}}).
\end{align}

\begin{remark} \label{rm.secChannel}
We could securely transmit the initialized model weights $\v w_j^{(0)}$ in CFL, analogous to the initial auxiliary variable $\v z_{j|i}^{(0)}$ in DFL. However, such secure transmission would not reduce the privacy loss in  \cref{eq.miCFLp}. 
The main reason is that at convergence all local models will be identical, i.e.,
$\v w_{j}^{(t_{\max})}=\v w_{k}^{(t_{\max})}$ for $(j,k)\in \mathcal{E}$. Thus, as long as there is one corrupt node, the passive adversary has knowledge of all $\v w_{j}^{(t_{\max})}$s. By inspecting \cref{eq.w_ave} we can see that the difference $\v w_{j}^{(t+1)}-\v w_{j}^{(t)}$ at every iteration is known, and thus $\v w_j^{(0)}$ for all $j\in{\cal V}$.

\end{remark}

\subsection{Privacy loss of DFL}
By inspection of Algorithm \ref{alg:pdmm}, the eavesdropping adversary can intercept all messages transmitted along non-secure channels, thus having access to:
\begin{align}\label{eq.knowDFLe}
    \{\Delta \v z_{j|k}^{(t+1)}\}_{(j,k)\in \mathcal{E},t\in \mathcal{T}}.
\end{align}
For any edge in the network, the transmitted information will be known by the passive adversary as long as one end node is corrupt. Accordingly, we define
 $\mathcal{E}_h=\{(j,k)\in \mathcal{V}_h\times \mathcal{V}_h\}$, $\mathcal{E}_c=\mathcal{E}\setminus \mathcal{E}_h$ as the set of honest and corrupt edges, respectively.  
Given that the passive adversary can collect all information obtained by the corrupt nodes, by inspecting Algorithm \ref{alg:pdmm} it thus has the knowledge of $\{\v x_j\}_{j\in \mathcal{V}_c}\cup\{ \v z_{j \mid k}^{(0)}, \Delta \v z_{j \mid k}^{(t+1)}\}_{(j, k) \in \mathcal{E}_c,t\in \mathcal{T}}$.  
Combining this with the eavesdropping knowledge in \cref{eq.knowDFLe} we conclude that the adversary has the following knowledge:
\begin{align*}
    \{\v x_j\}_{j\in \mathcal{V}_c}\cup\{ \v z_{j \mid k}^{(0)}\}_{(j, k) \in \mathcal{E}_c}\cup\{\Delta \v z_{j \mid k}^{(t+1)}\}_{(j, k) \in \mathcal{E},t\in \mathcal{T}}.
\end{align*}
The information loss of an honest node $i \in \mathcal{V}_h$'s private data is thus given by
\begin{align} \label{eq.miDFLp}
& I (X_i;\mathcal{O}_{\rm {DFL}})\nonumber\\
&=I( X_i; \{X_j\}_{j\in \mathcal{V}_c},\{ Z_{j \mid k}^{(0)}\}_{(j, k) \in \mathcal{E}_c},\{\Delta Z_{j \mid k}^{(t+1)}\}_{(j, k) \in \mathcal{E},t \in \mathcal{T}}),
\end{align}

We first give some initial results on the information that can be deduced by the adversary.
\begin{proposition} \label{prop.inter}
Let $\mathcal{G}_h=({\mathcal{V}}_h,\mathcal{E}_h)$ be the subgraph of $\mathcal{G}$ after eliminating all corrupt nodes. 
Let $\mathcal{G}_{ h,1},\ldots,\mathcal{G}_{ h,k_h}$ denote the components of $\mathcal{G}_h$ and let ${\mathcal{V}}_{ h,k}$ be the vertex set of $\mathcal{G}_{ h,k}$.  Without loss of generality, assume the honest nodes $i$ belong to the first honest component, i.e., $i\in \mathcal{V}_{h,1}$. 
The adversary has the following knowledge about node $i\in{\cal V}_{h,1}$:
\begin{itemize}
\item[i)] Noisy local gradients: 
\begin{align} \label{eq.graNoisyz}
    \forall t\in \mathcal{T}: \nabla f_i(\v  w_i^{(t)})+\tsum_{k \in \mathcal{N}_{i,h}} \v B_{i|k} \v z_{i|k}^{(0)}
\end{align}
\item[ii)] Difference of local gradients: 
\begin{align}\label{eq.graDif_i}
 \forall t\in \mathcal{T}: \nabla f_i(\v  w_i^{(t+1)})- \nabla f_i(\v w_i^{(t)}),
\end{align}
\item[iii)] The aggregated sum of local gradients in honest component $\mathcal{G}_{h,1}$: 
\begin{align}\label{eq.graVh1}
 \forall t\in \mathcal{T}: \sum_{j\in \mathcal{V}_{h,1} }\nabla f_j(\v w_j^{(t)}).
\end{align}  
\end{itemize}
\end{proposition}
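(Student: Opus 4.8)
The plan is to work directly from the update equations of A/PDMM (taking $\theta = 1$, i.e. PDMM, as in the rest of the section) together with the optimality condition \cref{eq.partialZero}, and to trace exactly which linear combinations of the auxiliary variables the adversary can form from its knowledge set in \cref{eq.miDFLp}. The key structural observation is that, after deleting the corrupt nodes, an honest node $i \in \mathcal{V}_{h,1}$ interacts with the rest of the world only through (a) its own $\v z$-entries on honest edges, whose increments $\Delta \v z_{i|k}^{(t+1)}$ are eavesdropped and whose initial values $\v z_{i|k}^{(0)}$ are \emph{not} known, and (b) edges to corrupt neighbours, for which \emph{both} the increments and the initial values are known to the adversary. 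I would introduce the shorthand $\mathcal{N}_{i,h} = \mathcal{N}_i \cap \mathcal{V}_{h,1}$ and $\mathcal{N}_{i,c} = \mathcal{N}_i \cap \mathcal{V}_c$ and split every sum over $\mathcal{N}_i$ accordingly.

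\textbf{Part i).} Rewrite \cref{eq.partialZero} for node $i$ as $\nabla f_i(\v w_i^{(t)}) + \tsum_{k\in\mathcal{N}_{i,h}} \v B_{i|k}\v z_{i|k}^{(t)} + \tsum_{k\in\mathcal{N}_{i,c}} \v B_{i|k}\v z_{i|k}^{(t)} + \rho d_i \v w_i^{(t)} = \v 0$. The adversary knows the corrupt-edge terms fully (via $\v z_{i|k}^{(0)}$ and the eavesdropped increments, reconstructing $\v z_{i|k}^{(t)}$ as in \cref{eq.dZ}). For the honest edges it knows only the increments, hence it knows $\v z_{i|k}^{(t)} - \v z_{i|k}^{(0)}$. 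The remaining unknown is $\v w_i^{(t)}$ itself; but here I would invoke the PDMM update \cref{eq.zupNQ}: for a neighbour $k$, $\v z_{i|k}^{(t+1)} = \v z_{k|i}^{(t)} + 2\rho \v B_{k|i}\v w_k^{(t)}$ — wait, more carefully, $\v z_{i|k}$ is updated by node $k$ from $\v w_k$, so to eliminate $\v w_i^{(t)}$ I use the updates of the $\v z$-entries that \emph{node $i$ produces}, namely $\v z_{k|i}^{(t+1)} = \v z_{i|k}^{(t)} + 2\rho \v B_{i|k}\v w_i^{(t)}$, giving $\v w_i^{(t)} = \tfrac{1}{2\rho}\v B_{i|k}^{-1}(\v z_{k|i}^{(t+1)} - \v z_{i|k}^{(t)})$ for any single neighbour $k$; since $\v B_{i|k} = \pm\v I$ this is just a signed difference of $\v z$-entries. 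Substituting back and collecting the constant (in $t$) terms $\tsum_{k\in\mathcal{N}_{i,h}} \v B_{i|k}\v z_{i|k}^{(0)}$ shows the adversary's information about $\nabla f_i(\v w_i^{(t)})$ is precisely that gradient plus this fixed unknown bias — which is exactly \cref{eq.graNoisyz}. The bookkeeping of \emph{which} $\v z$-entries are known up to their initial value versus fully known is the part that needs to be done with care.

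\textbf{Part ii).} This follows immediately from part i): the unknown bias $\tsum_{k\in\mathcal{N}_{i,h}} \v B_{i|k}\v z_{i|k}^{(0)}$ is independent of $t$ (by the argument around \cref{eq:zupc}, the $\Psi^\perp$-component is frozen and more simply, only the \emph{initial} value appears), so subtracting the quantity in \cref{eq.graNoisyz} at iteration $t+1$ from that at iteration $t$ cancels the bias and yields $\nabla f_i(\v w_i^{(t+1)}) - \nabla f_i(\v w_i^{(t)})$, i.e. \cref{eq.graDif_i}. \textbf{Part iii).} Sum the optimality condition \cref{eq.partialZero} over all $j \in \mathcal{V}_{h,1}$. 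For an edge $(j,k)$ internal to $\mathcal{V}_{h,1}$, the two terms $\v B_{j|k}\v z_{j|k}^{(t)}$ and $\v B_{k|j}\v z_{k|j}^{(t)}$ appear; using the PDMM relation between $\v z_{j|k}$ and $\v z_{k|j}$ (and $\v B_{j|k} = -\v B_{k|j}$) these combine into a telescoping/difference structure whose $t$-dependence the adversary can track from increments alone, and whose frozen part again only involves initial values — but crucially, on a component-internal edge, the \emph{sum} of the relevant increments is observable even though individual $\v z^{(0)}$'s are not. Edges leaving $\mathcal{V}_{h,1}$ go to corrupt nodes (by definition of the components of $\mathcal{G}_h$), so those terms are fully known. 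The $\rho d_j \v w_j^{(t)}$ terms are handled as in part i). Collecting everything, the only surviving unknown combination is a fixed constant, so the adversary learns $\tsum_{j\in\mathcal{V}_{h,1}}\nabla f_j(\v w_j^{(t)})$ up to a $t$-independent shift; taking a difference in $t$ (or noting the convention $\mathcal{T}$ includes $t=0$ and $\v z^{(0)}$-structure) pins it down as claimed.

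\textbf{Main obstacle.} The delicate point is the exact accounting in Part iii): showing that when the optimality conditions are summed over a whole honest component, every auxiliary variable on an internal honest edge enters in a combination the adversary can reconstruct purely from eavesdropped increments (despite not knowing any individual $\v z_{j|k}^{(0)}$), while the cross-component edges are automatically corrupt edges and hence fully known. This requires carefully using the pairing of $\v z_{i|k}$ and $\v z_{k|i}$ under the PDMM update \cref{eq.zupNQ} to express $\v w$'s in terms of $\v z$-differences, and verifying the frozen $\Psi^\perp$-type contribution collapses to a single $t$-independent unknown rather than leaking through. I would also need to double-check the edge-orientation convention $\v B_{i|k} = \v I$ iff $i < k$ is applied consistently so the signs in the telescoping sums are right.
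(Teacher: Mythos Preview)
Your overall structure matches the paper's: split $\mathcal{N}_i$ into honest and corrupt neighbours, rewrite \cref{eq.partialZero} with the honest-edge $\v z$'s expanded via \cref{eq.dZ}, and then difference or sum to kill the unknown $\v z_{i|k}^{(0)}$'s. Part~ii) is fine. But there is a genuine gap in how you determine $\v w_i^{(t)}$, and it propagates into Parts~i) and~iii).

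Your formula $\v w_i^{(t)} = \tfrac{1}{2\rho}\v B_{i|k}^{-1}\bigl(\v z_{k|i}^{(t+1)} - \v z_{i|k}^{(t)}\bigr)$ requires the adversary to know $\v z_{k|i}^{(t+1)} - \v z_{i|k}^{(t)}$. On a \emph{corrupt} edge that is fine. On an \emph{honest} edge the adversary only knows each $\v z$-entry up to its unknown initial value, so the difference is known only up to $\v z_{k|i}^{(0)} - \v z_{i|k}^{(0)}$, and hence $\v w_i^{(t)}$ is only determined up to a $t$-independent constant. For a node $i$ with \emph{no} corrupt neighbour --- which is the generic situation for nodes in the interior of $\mathcal{V}_{h,1}$ --- your argument therefore yields the noisy gradient in Part~i) only up to an extra bias $\rho d_i\cdot(\text{const})$, and in Part~iii) leaves the gradient sum known only up to a constant. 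Your suggested fix of ``taking a difference in $t$'' does not recover the sum itself, only its increments.

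The paper closes this gap with a single additional observation you do not use: at convergence all $\v w_j^{(t_{\max})}$ coincide (consensus constraint), so any corrupt node's weight pins down \emph{every} node's $\v w_j^{(t_{\max})}$; combined with the increment relation $\Delta\v z_{j|i}^{(t+1)} - \Delta\v z_{i|j}^{(t)} = 2\rho\v B_{i|j}(\v w_i^{(t)} - \v w_i^{(t-1)})$ (which you essentially have), this gives the adversary \emph{all} $\v w_j^{(t)}$ exactly. Once $\v w_i^{(0)}$ is known, $\Delta\v z_{j|i}^{(1)} = \v z_{i|j}^{(0)} - \v z_{j|i}^{(0)} + 2\rho\v B_{i|j}\v w_i^{(0)}$ also yields $\v z_{i|j}^{(0)} - \v z_{j|i}^{(0)}$ on every honest edge, which is exactly the residual term you identified in Part~iii); it is therefore \emph{known}, not an unknown constant, and the gradient sum \cref{eq.graVh1} follows directly. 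Add this convergence argument and your proof goes through.
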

\begin{proof}
    See Appendix~\ref{pf.prop}.  
\end{proof}
We now proceed to present the main result of this paper. In particular, we will show that the information loss of DFL is dependent on the variance of initialized $\v z^{(0)}$, i.e., $\sigma^2_Z$. Notably,  in the special case where $\sigma^2_Z=0$,  all local gradients will be exposed similar to the case of centralized FL. 
In contrast, when $\sigma^2_Z$ approaches infinity,  the term \cref{eq.graNoisyz} contains no information about the private data or the local gradient, theoretically leading to $ I (X_i;\nabla f_i(  W_i^{(t)})+\tsum_{k \in \mathcal{N}_{i,h}} \v B_{i|k}  Z_{i|k}^{(0)})=0$. 
More specifically, we have the following privacy bounds.
\begin{theorem}[Privacy bounds of DFL] \label{thm_pas2}
We have
\begin{align}\label{eq.miDFLp_inter} 
I& (X_i; \mathcal{O}_{\rm {CFL}} ) \stackrel{(a)}{\geq} I (X_i;\mathcal{O}_{\rm {DFL}} )
   \nonumber\\
&= I \big( X_i; \{X_j\}_{j\in \mathcal{V}_c},\{ Z_{j \mid k}^{(0)}\}_{(j, k) \in \mathcal{E}_c},\{W_j^{(t)}\}_{j\in \mathcal{V}, t\in \mathcal{T}},\big.\nonumber\\
  &\quad\big.\{\nabla f_j(W_j^{(t)})\}_{j\in \mathcal{V}_c,t\in \mathcal{T}}, \{Z_{j|k}^{(0)}-Z_{k|j}^{(0)}\}_{(j,k)\in\mathcal{E}_h} ,\big.\nonumber\\
  &\quad\big.\{\nabla f_j(W_j^{(t)}) + \tsum_{k \in \mathcal{N}_{j,h}} \v B_{j|k} Z_{j|k}^{(0)}\}_{j\in \mathcal{V}_h,t\in \mathcal{T}}\big) \\
  &\stackrel{(b)}{\geq} I\big( X_i; \{X_j\}_{j\in \mathcal{V}_c},\{W_j^{(t)}\}_{j\in \mathcal{V}, t\in \mathcal{T}},\{\nabla f_j(W_j^{(t)})\}_{j\in \mathcal{V}_c,t\in \mathcal{T}}, \big.\nonumber\\
&\quad\big.\{\nabla f_j(W_j^{(t+1)})-\nabla f_j(W_j^{(t)})\}_{j\in \mathcal{V}_h,t\in \mathcal{T}}, \big.\nonumber\\
&\quad\big.\{\tsum_{j\in \mathcal{V}_{h,l}}\nabla f_j(W_j^{(t)})\}_{1\leq l \leq k_h, t\in \mathcal{T}}\big),\label{eq.miLower} 
\end{align}
where we have equality in $(a)$ if $\sigma^2_Z=0$, and equality in $(b)$ if $\sigma^2_Z\rightarrow \infty$.
\end{theorem}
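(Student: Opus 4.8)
The plan is to prove the three links of the displayed chain — the middle equality identifying $I(X_i;\mathcal O_{\rm DFL})$ with the mutual information of the ``expanded'' knowledge tuple, inequality $(a)$, and inequality $(b)$ — together with the two equality cases, using throughout that mutual information is invariant under a measurable bijection applied to the conditioning variable and non-increasing under an arbitrary map applied to it (the data-processing inequality, DPI). I would treat the middle equality first, then $(b)$ (which makes the coarsening structure explicit), and then $(a)$. For the middle equality I would extend the bookkeeping of \cref{prop.inter} to the corrupt part of the network and exhibit a measurable bijection between the raw view $\mathcal O_{\rm DFL}=\big(\{X_j\}_{j\in\mathcal V_c},\{Z_{j\mid k}^{(0)}\}_{(j,k)\in\mathcal E_c},\{\Delta Z_{j\mid k}^{(t+1)}\}_{(j,k)\in\mathcal E,t\in\mathcal T}\big)$ and the expanded tuple on the right-hand side of \cref{eq.miDFLp_inter}. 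In the forward direction: on a corrupt edge the adversary holds both $\v z_{j\mid k}^{(0)}$ and $\v z_{k\mid j}^{(0)}$, hence every $\v z_{j\mid k}^{(t)}$ by accumulating the $\Delta\v z$'s; the $z$-update \cref{eq.zupNQ} at $\theta=1$ gives $\v w_i^{(t)}=\pm\tfrac{1}{2\rho}\big(\v z_{j\mid i}^{(t+1)}-\v z_{i\mid j}^{(t)}\big)$, which after substituting each $z$ by its initial value plus accumulated increments is a known quantity \emph{provided} the honest-edge difference $\v z_{j\mid i}^{(0)}-\v z_{i\mid j}^{(0)}$ is known, so all $\v w_j^{(t)}$ are recovered; the optimality condition \cref{eq.partialZero} then yields $\nabla f_j(\v w_j^{(t)})$ for corrupt $j$ (together with $\v x_j$) and, for honest $j$, the noisy gradient $\nabla f_j(\v w_j^{(t)})+\sum_{k\in\mathcal N_{j,h}}\v B_{j\mid k}\v z_{j\mid k}^{(0)}$, the uncancelled term being exactly the contribution of the unknown honest-edge initializations; finally the honest-edge differences themselves are deducible, which is the part I would quote from the proof of \cref{prop.inter}. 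The backward direction is routine: from the expanded tuple one reconstructs every $\v z_{j\mid k}^{(t)}$, hence every $\Delta\v z_{j\mid k}^{(t+1)}$.

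For $(b)$ the lower bound of \cref{eq.miLower} is read off the expanded tuple by \emph{discarding} $\{Z^{(0)}\}_{\mathcal E_c}$ and $\{Z^{(0)}_{j\mid k}-Z^{(0)}_{k\mid j}\}_{\mathcal E_h}$ and replacing each honest node's noisy-gradient sequence $\{\nabla f_j(\v w_j^{(t)})+N_j\}_t$, where $N_j=\sum_{k\in\mathcal N_{j,h}}\v B_{j\mid k}\v z_{j\mid k}^{(0)}$ is time-independent, by its consecutive differences $\nabla f_j(\v w_j^{(t+1)})-\nabla f_j(\v w_j^{(t)})$, in which $N_j$ cancels, and by the per-component sums $\sum_{j\in\mathcal V_{h,l}}\nabla f_j(\v w_j^{(t)})$, whose aggregated noise $\sum_{j\in\mathcal V_{h,l}}N_j$ is a known linear function of the honest-edge $Z^{(0)}$-differences; all of these are deterministic functions of the expanded tuple, so DPI gives $(b)$. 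For equality as $\sigma_Z^2\to\infty$: conditioned on the retained quantities, the residual randomness in the discarded and garbled pieces is carried only by $N_j$ and the $Z^{(0)}$-differences, whose variances diverge, so by a standard additive-noise argument (the noise density flattens, hence the mutual information of the affected terms with $X_i$ vanishes in the limit) the expanded and reduced views become equally informative.

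For $(a)$ the direction is reversed: I would bound $I(X_i;\mathcal O_{\rm DFL})$ above by the mutual information of the \emph{full} local-gradient/weight transcript of the PDMM run — DPI applies, since by the preceding paragraph the expanded tuple is a coarsening of that transcript obtained by linear aggregation and addition of the $\v z^{(0)}$-noise, which is drawn independently of the data — and then compare that transcript with $\mathcal O_{\rm CFL}$, which already exposes the entire local-gradient/weight transcript of the FedAvg run. At $\sigma_Z^2=0$ the noise disappears, the honest noisy gradients become raw gradients, and the two quantities coincide. The cleanest way I see to phrase the comparison of the two transcripts is to condition on the adversary-observable weight sequence, under which each $\nabla f_j(\v w)$ is a deterministic function of $\v x_j$, so that in both protocols the information about $X_i$ is realized through evaluations of the same local cost function $f_i$.

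\textbf{Main obstacle.} I expect $(a)$ to be the hard part. The CFL and DFL weight trajectories are different, so ``transcript of the PDMM run $\preceq$ transcript of the FedAvg run'' is not a combinatorial containment, and the conditioning device above still leaves a genuine step to carry out: a non-degeneracy argument showing that probing $f_i$ along the PDMM iterates is no more informative about $X_i$ than probing it along the FedAvg iterates. The secondary technical points are verifying completeness of the bijection in the middle equality — in particular the recoverability of the honest-edge differences $\v z_{j\mid k}^{(0)}-\v z_{k\mid j}^{(0)}$ borrowed from \cref{prop.inter} — and turning ``$\sigma_Z^2\to\infty$'' in $(b)$ into a bona fide convergence-of-mutual-information statement rather than a formal limit.
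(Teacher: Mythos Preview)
Your treatment of the middle equality and of the inequality in $(b)$ matches the paper's in spirit, but watch the order of deductions in your forward bijection: you recover the $\v w_j^{(t)}$'s by plugging the honest-edge differences $\v z_{j|k}^{(0)}-\v z_{k|j}^{(0)}$ into the $z$-update, and then say those differences are themselves deducible by quoting \cref{prop.inter}. But the proof of \cref{prop.inter} obtains those differences \emph{from} the $\v w$'s (the relation \cref{eq:zdif} still carries a $2\rho\v B_{i|j}\v w_i^{(t)}$ term). The paper breaks this circularity by first extracting the increments $\v w_i^{(t)}-\v w_i^{(t-1)}$ from two consecutive $\Delta\v z$-updates (\cref{eq:zt}), pinning down $\v w_i^{(t_{\max})}$ via the consensus constraint at convergence, backtracking to all $\v w_i^{(t)}$, and only then reading off the $z^{(0)}$-differences.

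For the equality case of $(b)$ the paper takes a more structured route than your ``noise density flattens'' heuristic. It drops $\{Z_{j|k}^{(0)}\}_{(j,k)\in\mathcal{E}_c}$ with a one-line ``asymptotically independent as $\sigma_Z^2\to\infty$'' justification (so on that piece it is no more rigorous than you), but for the remaining noisy-gradient/$Z^{(0)}$-difference block it isolates a concrete lemma (Lemma~\ref{lm.linear}): if each $g(X_j)+R_j$ individually carries no information about $X_i$, then the joint view $\big(g(X_1)+R_1,\ldots,g(X_n)+R_n,\sum_j R_j\big)$ is exactly as informative as $\sum_j g(X_j)$, proved via a bijective upper-triangular linear map and a Markov chain on the transformed coordinates. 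This is precisely what collapses the noisy honest gradients $\nabla f_j+N_j$ together with the honest-edge $Z^{(0)}$-differences (which determine $\sum_{j\in\mathcal{V}_{h,l}} N_j$) into the component sums $\sum_{j\in\mathcal{V}_{h,l}}\nabla f_j$. Your sketch would have to rediscover this Markov structure, because the $N_j$'s and the $Z^{(0)}$-differences are linearly entangled and cannot be dismissed term by term.

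On $(a)$ your diagnosis that this is the delicate link is correct, and in fact the paper's own argument is a single sentence: ``when $\sigma_Z^2=0$ the expanded tuple reduces to \cref{eq.miCFLp}''. The paper does not carry out the cross-algorithm comparison you worry about (PDMM iterates versus FedAvg iterates), nor does it supply a monotonicity-in-$\sigma_Z^2$ step for the strict inequality; it simply identifies the $\sigma_Z^2=0$ expanded tuple with $\mathcal{O}_{\rm CFL}$. So your concern is legitimate, but be aware that the paper sidesteps rather than resolves it.
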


\begin{proof}
 See Appendix~\ref{pf.thm_pas2}.  
\end{proof}
Hence, by inspecting the lower bound we can see that 
except for the knowledge of the corrupt nodes, i.e.,$\{\v x_j\}_{j\in \mathcal{V}_c},\{\v w_j^{(t)}\}_{j\in \mathcal{V}_c, t\in \mathcal{T}},\{f_j(\v w_j^{(t)})\}_{j\in \mathcal{V}_c,t\in \mathcal{T}}$, the revealed information  includes the model weights of all honest nodes $\{\v w_j^{(t)}\}_{j\in \mathcal{V}_h,t\in \mathcal{T}}$,  gradient differences of each honest node over successive iterations $\{\nabla f_j(\v  w_j^{(t+1)})- \nabla f_j(\v w_j^{(t)})\}_{j\in \mathcal{V}_h,t\in \mathcal{T}}$ (note that $\cup_{1\leq l \leq k_h} \mathcal{V}_{h,l}=\mathcal{V}_h$), and the sum of local gradients of the honest nodes $\{\tsum_{j\in \mathcal{V}_{h,l}}\nabla f_j(\v w_j^{(t)})\}_{1\leq l \leq k_h,t\in \mathcal{T}}$ in each component.

Regarding the feasibility of the lower bound, we have the following remark.
\begin{remark} \label{rm.graNoisy}
It might seem impractical that the lower bound in Theorem\ref{thm_pas2} requires that the variance $\sigma^2_Z$ approaches infinity.
For practical applications, however,  like DNNs,  a relatively small variance is already sufficient to make the leaked information in the noisy gradients negligible compared to gradient differences \cref{eq.graDif_i} and the gradient sum \cref{eq.graVh1} (we will verify this claim in Section \ref{subsec:opt}). 
\end{remark}

\subsection{Privacy gap between CFL and DFL}
Theorem~\ref{thm_pas2} shows that the privacy loss in DFL is either less than or equal to that in CFL. This naturally leads to key questions: under what circumstances does this equality or inequality hold?
To analyze the privacy gap we have the following result, showing that the privacy gap between DFL and CFL is dependent on how much more information about the private data $X_i$ can the local gradients reveal given the knowledge of DFL, e.g., gradient differences and the gradient sum. 
\begin{corollary}\label{cor.mi}
\textbf{Privacy gap between CFL and DFL}
If the lower bound \cref{eq.miLower} is achieved, for an honest node $i \in \mathcal{V}_h$'s private data, the privacy gap between CFL and DFL is  given by 
\begin{align}\label{eq.miGap}
&I(X_i;\mathcal{O}_{\rm {CFL}})-I(X_i;\mathcal{O}_{\rm {DFL}})\nonumber\\
&=I\big( X_i;\{\nabla f_j(W_j^{(t)})\}_{j\in \mathcal{V}_h,t\in \mathcal{T}}|\{X_j\}_{j\in \mathcal{V}_c},\{W_j^{(t)}\}_{j\in \mathcal{V}, t\in \mathcal{T}},\big.\nonumber\\
&\quad\big. \{\nabla f_j(W_j^{(t+1)})-\nabla f_j(W_j^{(t)})\}_{j\in \mathcal{V}_h,t\in \mathcal{T}}, \big.\nonumber\\
&\quad\big.\{\tsum_{j\in \mathcal{V}_{h,l}}\nabla f_j(W_j^{(t)})\}_{1\leq l \leq k_h, t\in \mathcal{T}}\big)  
\end{align}
\end{corollary}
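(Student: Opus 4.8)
The plan is to obtain \eqref{eq.miGap} directly from Theorem~\ref{thm_pas2} by a chain-rule manipulation of mutual information, exploiting the assumption that the lower bound \eqref{eq.miLower} is attained (so that $I(X_i;\mathcal{O}_{\rm DFL})$ equals the expression in \eqref{eq.miLower} verbatim) and that the upper bound $(a)$ can be rewritten in a form that shares a common conditioning set with that lower bound. The key observation is that $\mathcal{O}_{\rm CFL}$, as made explicit in \eqref{eq.miCFLp}, is information-equivalent to the tuple consisting of $\{X_j\}_{j\in\mathcal{V}_c}$, all model weights $\{W_j^{(t)}\}_{j\in\mathcal{V},t\in\mathcal{T}}$, all corrupt-node gradients $\{\nabla f_j(W_j^{(t)})\}_{j\in\mathcal{V}_c,t\in\mathcal{T}}$, and \emph{additionally} all honest-node gradients $\{\nabla f_j(W_j^{(t)})\}_{j\in\mathcal{V}_h,t\in\mathcal{T}}$ --- and that from these honest-node gradients one can deterministically recover both the gradient differences $\{\nabla f_j(W_j^{(t+1)})-\nabla f_j(W_j^{(t)})\}_{j\in\mathcal{V}_h,t\in\mathcal{T}}$ and the honest-component gradient sums $\{\tsum_{j\in\mathcal{V}_{h,l}}\nabla f_j(W_j^{(t)})\}_{1\le l\le k_h,t\in\mathcal{T}}$. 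Hence $\mathcal{O}_{\rm CFL}$ is equivalent to the union of (i) the conditioning set $S:=\big(\{X_j\}_{j\in\mathcal{V}_c},\{W_j^{(t)}\}_{j\in\mathcal{V},t\in\mathcal{T}},\{\nabla f_j(W_j^{(t)})\}_{j\in\mathcal{V}_c,t\in\mathcal{T}},\{\nabla f_j(W_j^{(t+1)})-\nabla f_j(W_j^{(t)})\}_{j\in\mathcal{V}_h,t\in\mathcal{T}},\{\tsum_{j\in\mathcal{V}_{h,l}}\nabla f_j(W_j^{(t)})\}_{1\le l\le k_h,t\in\mathcal{T}}\big)$ appearing in \eqref{eq.miLower}, and (ii) the extra block $G_h:=\{\nabla f_j(W_j^{(t)})\}_{j\in\mathcal{V}_h,t\in\mathcal{T}}$.

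First I would record the equivalence $I(X_i;\mathcal{O}_{\rm CFL})=I(X_i;S,G_h)$, justified because each of $S$ and $G_h$ is a deterministic function of $\mathcal{O}_{\rm CFL}$ (via \eqref{eq.w_ave} the weights and initial weights are recoverable, the corrupt data is in the passive-adversary knowledge, and the difference/sum quantities are linear functions of the honest gradients), and conversely $\mathcal{O}_{\rm CFL}$ is a deterministic function of $(S,G_h)$ since all weights and all gradients are already listed there. Mutual information is invariant under such bijective recodings, so this step is purely bookkeeping. Second, under the hypothesis that \eqref{eq.miLower} holds with equality, $I(X_i;\mathcal{O}_{\rm DFL})=I(X_i;S)$. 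Third, apply the chain rule for mutual information:
\begin{align*}
I(X_i;\mathcal{O}_{\rm CFL}) &= I(X_i;S,G_h) = I(X_i;S) + I(X_i;G_h\mid S)\\
&= I(X_i;\mathcal{O}_{\rm DFL}) + I\big(X_i;\{\nabla f_j(W_j^{(t)})\}_{j\in\mathcal{V}_h,t\in\mathcal{T}}\,\big|\,S\big),
\end{align*}
and rearranging gives exactly \eqref{eq.miGap}.

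The main obstacle --- the only place requiring genuine care rather than routine symbol-pushing --- is the first step: verifying that $\mathcal{O}_{\rm CFL}$ really is information-equivalent to $(S,G_h)$ and, in particular, that nothing in $\mathcal{O}_{\rm CFL}$ is ``lost'' when we replace it by this tuple. One must check that the initial weights $W_j^{(0)}$ (explicitly transmitted in CFL, per the discussion preceding \eqref{eq.knowCFLe}) are already subsumed, which follows because $W_j^{(0)}$ is among $\{W_j^{(t)}\}_{t\in\mathcal{T}}$; and that the honest-node gradients are the \emph{only} components of $\mathcal{O}_{\rm CFL}$ not already present in $S$, which is where the explicit form \eqref{eq.miCFLp} together with \eqref{eq.w_ave} is used. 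One should also note that $S$ is precisely the conditioning tuple appearing inside \eqref{eq.miLower}, so that the corollary's hypothesis ``the lower bound \eqref{eq.miLower} is achieved'' plugs in directly; this alignment of notation is what makes the chain-rule step clean. I would present the argument in the order: (1) recode $\mathcal{O}_{\rm CFL}$; (2) invoke the equality hypothesis for $\mathcal{O}_{\rm DFL}$; (3) chain rule and rearrange.
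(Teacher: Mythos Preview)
Your proposal is correct and follows essentially the same route as the paper: both recognize that $\mathcal{O}_{\rm CFL}$ is information-equivalent to the DFL lower-bound tuple $S$ together with the honest-node gradients $G_h$, then apply the chain rule $I(X_i;S,G_h)=I(X_i;S)+I(X_i;G_h\mid S)$ and invoke the hypothesis $I(X_i;\mathcal{O}_{\rm DFL})=I(X_i;S)$. One small point you glossed over: your $S$ still contains $\{\nabla f_j(W_j^{(t)})\}_{j\in\mathcal{V}_c,t\in\mathcal{T}}$, whereas \eqref{eq.miGap} does not---the paper handles this as a final step by noting that corrupt-node gradients are determined by $\{X_j\}_{j\in\mathcal{V}_c}$ and $\{W_j^{(t)}\}$, so they can be dropped from the conditioning without altering the mutual information.
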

\begin{proof}
    See Appendix~\ref{pf.cor}.  
\end{proof}

\begin{remark} \label{rm.dpcase}
The privacy gap, as defined in \cref{eq.miGap}, depends on the number of corrupt nodes and narrows notably in the extreme case where only one honest node remains. By inspecting \cref{eq.miGap}, we can see that the privacy gap is intrinsically linked to the sum of gradients from honest nodes within each honest component, specifically, $\{\tsum_{j\in \mathcal{V}_{h,l}}\nabla f_j(W_j^{(t)})\}_{1\leq l \leq k_h, t\in \mathcal{T}}$. As the number of honest nodes diminishes, the specificity of information conveyed by individual node gradients increases, consequently reducing the privacy gap.  In the most extreme scenario, where only one node is honest, i.e., $\mathcal{V}_h=\{i\}$, the privacy gap reduces to zero since \cref{eq.miGap}$=I\big( X_i;\{\nabla f_i(W_i^{(t)})\}_{t\in \mathcal{T}}|\{\nabla f_i(W_i^{(t)})\}_{t\in \mathcal{T}}\big)=0$. 
\end{remark}

In the following sections, we delve into two distinct cases to further investigate the privacy gap. First, we examine a straightforward logistic regression example, where it's possible to analytically calculate the privacy loss. In this instance, we find no discernible privacy gap. Next, we shift our focus to a more conventional application within FL: DNNs.  Through extensive empirical analysis, we observe a notable privacy gap between CFL and DFL. This gap highlights DFL's reduced susceptibility to privacy attacks when compared to CFL.

\begin{figure}[ht]
    \centering
    \includegraphics[width=0.5\textwidth]{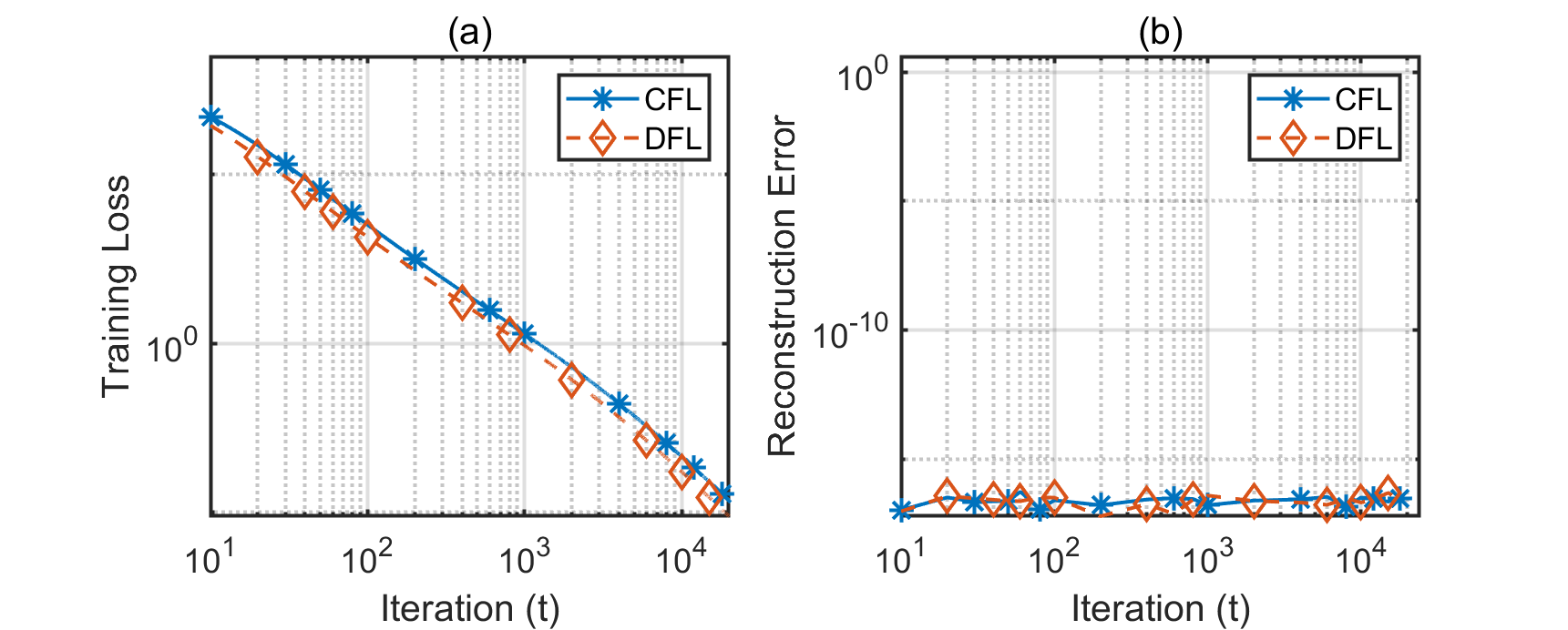}
    \vskip -8pt
    \caption{Privacy comparisons of centralized and decentralized logistic regression. (a) Training loss and (b) Reconstruction error of input data as a function of iteration number $(t)$ using CFL (blue color) and DFL  (red color). } 
    \label{fig:loss_error}
\end{figure}

\section{Logistic regression}
\label{sec:experimentI}
Logistic regression is widely adopted in various applications and serves as a fundamental building block for complex applications.  We will first give analytical derivations and then demonstrate numerical results. 

\subsection{Theoretical analysis}
Consider a logistic model with model parameters $\v w_{i}\in\mathbb{R}^v$ (weights) and $b_i\in\mathbb{R}$ (bias)
where each node has a local dataset $\{(\v x_{ik}, \ell_{ik}) : k=1,\ldots,n_i\}$,  where $\v x_{ik} \in \mathbb{R}^v$ is an input sample, $\ell_{ik} \in \{0,1\}$ is the associated label. In addition, let $y_{ik} = \v w_{i}^T \v x_{ik}+b_i$ denote the output of the model given the input $\v x_{ik}$.
Note that the bias term can be included in the weight vector. Here we explicitly separate the bias from the true network weights as it will lead to more insight into how to reconstruct the input data from the observed gradient information. Correspondingly, $\v z_{i|j}$ is also separated into $\v z_{w,i|j}$ and $z_{b,i|j}$. With this, the loss (log-likelihood) function  has the form 
\begin{align}\label{eq:costlr}
    f_i(\v w_i,b_i) &= - \tsum_{k=1}^{n_{i}}
\big( \ell_{i k} \log \frac{1}{1+e^{-y_{ik}}} \nonumber \\
&\quad+(1-\ell_{i k}) \log \frac{e^{-y_{ik}}}{1+e^{-y_{ik}}}\big).
\end{align}
With this, \cref{eq.graDif_i} becomes
\begin{align}
    & \frac{\partial f_i}{\partial \v w_{i}}^{\!\!\!(t+1)}-\frac{\partial f_i}{\partial \v w_{i}}^{\!\!\!(t)} = \tsum_{k=1}^{n_{i}}\big( \frac{1}{1+e^{-y^{(t+1)}_{ik}}} - \frac{1}{1+e^{-y^{(t)}_{ik}}}\big) \v x_{ik} \nonumber \\
     &= -\tsum_{j \in \mathcal{N}_i} \v B_{i|j} \Delta \v z_{w,i|j}^{(t)}+\rho d_i\big( \v w_i^{(t)} - \v w_i^{(t+1)} \big), \label{equ:bound1_lr}
\end{align}
and
\begin{align}
     &\frac{\partial f_i}{\partial b_{i}}^{\!\!\!(t+1)}-\frac{\partial f_i}{\partial b_{i}}^{\!\!\!(t)} = \tsum_{k=1}^{n_{i}}\big( \frac{1}{1+e^{-y^{(t+1)}_{ik}}} - \frac{1}{1+e^{-y^{(t)}_{ik}}}\big) \nonumber \\
     &= -\tsum_{j \in \mathcal{N}_i} \v B_{i|j} \Delta  z_{b,i|j}^{(t)}+\rho d_i\big( b_i^{(t)} - b_i^{(t+1)} \big)\label{equ:bound2_lr},
\end{align} 
where all terms in the RHS are known by the adversary as the differences of the local model $\v w_i^{(t+1)} - \v w_i^{(t)}$ can be determined from $\Delta {\v z}_{j|i}^{(t+1)}-\Delta {\v z}_{i|j}^{(t)}$ by considering two successive $\v z$ updates of \cref{eq.zupNQ}.  As a special case where $n_i=1$, \cref{equ:bound1_lr} is just a scaled version of 
$\v x_{ik}$ where the scaling is given by \cref{equ:bound2_lr}. Hence, with gradient difference, we can analytically compute $\v x_{ik}$ as \begin{align}
   \v x_{ik} = \frac{-\tsum_{j \in \mathcal{N}_i} \v B_{i|j} \Delta \v z_{w,i|j}^{(t)}+\rho d_i\big( \v w_i^{(t)} - \v w_i^{(t+1)} \big)}{-\tsum_{j \in \mathcal{N}_i} \v B_{i|j} \Delta  z_{b,i|j}^{(t)}+\rho d_i\big( b_i^{(t)} - b_i^{(t+1)} \big)}\nonumber.
\end{align}
Hence, in this case, one gradient difference at an arbitrary iteration is sufficient to reveal all information about the private data, i.e.,    $I(X_i;\nabla f_j(W_j^{(t+1)})-\nabla f_j(W_j^{(t)}))=I(X_i;X_i)$ which is maximum. Thus, there is no privacy gap between CFL and DFL, i.e., $\cref{eq.miGap}=0$. For the case of $n_i>1$, the input $\v x_{ik}$ can also be reconstructed by searching for solutions that fit for given observations, i.e., \cref{equ:bound1_lr} and \cref{equ:bound2_lr} across iterations.

\subsection{Convergence behavior}
To validate the theory presented above, we consider a toy example of a random geometric graph of $n=60$ nodes randomly distributed in the unit cube having a communication radius $r = \sqrt{2\log n/n}$ to
ensure connectivity with high probability~\cite{dall2002random}. Detailed settings can be found in the Appendix \ref{app.conv}. 
In \autoref{fig:loss_error}(a) we demonstrate the convergence performances of both centralized and decentralized protocols, i.e.,  the loss \cref{eq:costlr}, averaged over all nodes, as a function of the iteration $t$. We can see that both methods have a similar convergence rate. 

\subsection{Privacy gap between CFL and DFL}
To evaluate the performance of input reconstruction, we define the reconstruction error as the average Euclidean distance between the reconstructed samples, denoted as $\hat{\v x}_{ik}$, and the original data samples $\v x_{ik}$ given by $\frac{1}{n} \tsum_{i=1}^n \|\v {\hat{x}}_{ik}-\v x_{ik}\|_2$. The corresponding errors are plotted as a function of iteration number in  \autoref{fig:loss_error}(b). We can see that, the reconstruction error of DFL, using gradient differences, has the same level of error as the CFL case across all iterations. 
Hence, we conclude that for the logistic regression example, there is no privacy gap between CFL and DFL, i.e.,  $\cref{eq.miGap}=0$,  aligning with our theoretical result.

\section{Deep neural networks I: Convergence and gradient inversion attacks}\label{sec:experimentII}
For DNNs, it is challenging to give an analytical analysis like the above logistic regression example.  We resort to empirical assessments of privacy leakage through privacy attacks.  As we shall see, the empirical evaluation results are consistent with our theoretical claims. 

\subsection{Convergence behavior}
To test the performance of the introduced DFL protocol, we make comparison between the training process of CFL and DFL. The detailed settings and results are shown in Appendix \ref{app.conv}. Notably, the performance of the decentralized protocol closely aligns with that of the centralized approach. This aligns with previous research findings~\cite{niwa2020edge}, which suggests that decentralized protocols perform comparably to centralized ones, particularly in scenarios with independently and identically distributed data. The subsequent section will focus on evaluating their privacy via gradient inversion attack and membership inference attack, highlighting the privacy advantages inherent in DFL.

\subsection{Gradient inversion attack in DFL} \label{subsec:dnnPriv}
Recall that in Proposition \ref{prop.inter} we identified three types of information directly related to the honest node's local gradient: noisy gradients \cref{eq.graNoisyz}, gradient differences \cref{eq.graDif_i}, and the gradient sum \cref{eq.graVh1}. 
Similar to the traditional gradient inversion attack described in \cref{eq.traInv}, inputs can be inverted from them as well. As an example with the gradient sum \cref{eq.graVh1} the corresponding optimization problem can be formulated as
\begin{align}\label{eq.propInv2}
&\{(\v x_i^{\prime *}, \v \ell_i^{\prime *})\}_{i\in \mathcal{V}_h} =\underset{\v x_i^{\prime}, \v \ell_i^{\prime}}{\arg \min }\big\|\tsum_{i\in \mathcal{V}_{h,1}}\nabla f_i(\v w_i^{(t)},(\v x'_i,\v \ell'_i))\nonumber \\
&\quad- \tsum_{i\in \mathcal{V}_{h,1}}\nabla f_i(\v w_i^{(t)},(\v x_i,\v \ell_i))\big\|^2,
\end{align}
Although the gradient inversion attack in the decentralized case looks similar to the traditional centralized case, there are some important differences, in particular with respect to label recovery and fidelity of reconstructed inputs, discussed in the following remarks.
 \begin{figure}[t]
    \centering
    \includegraphics[width=0.4\textwidth]{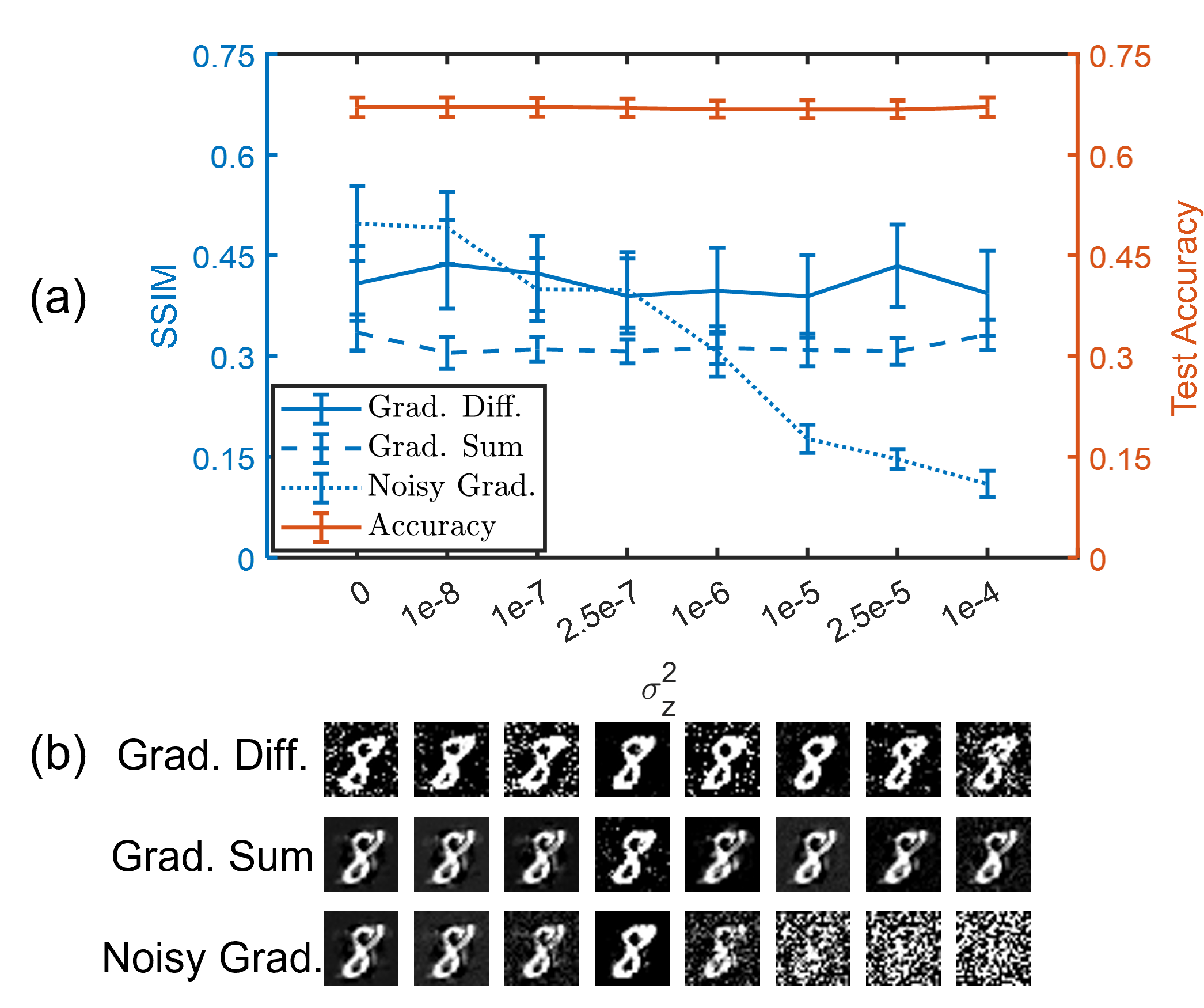}
    \vskip -6pt
    \caption{ (a) Averaged SSIM of reconstructed inputs by inverting noisy gradients, gradient differences, and the gradient sum (blue lines) and test accuracy (red line) for different variances of initialized auxiliary variable $\v z^{(0)}$:  $\sigma^2_Z=0,10^{-8},10^{-7},2.5\times10^{-7},10^{-6},10^{-5},2.5\times10^{-5}$ and $10^{-4}$. (b) Sample examples of reconstructed inputs for each case.}
    \label{fig:z_std}
    \vskip -6pt
\end{figure}

\begin{remark}\label{rm.graDif}
 \textbf{Analytical label recovery is no longer applicable given gradient differences \cref{eq.graDif_i}.}
With gradient differences, \cref{eq.label} explained in Section \ref{subsec.labelRecov} becomes 
\begin{align}
&\nabla f(\v w_{i,L,c}^{(t)})-\nabla f(\v w_{i,L,c}^{(t+1)}) = g_c^{(t)} \v a_{L-1}^{(t)}-g_c^{(t+1)} \v a_{L-1}^{(t+1)} \nonumber\\
&= \big( \frac{e^{y^{(t)}_c}}{\tsum_j e^{y^{(t)}_j}} -\delta_{c,\ell_i}\big)\v a_{L-1}^{(t)} -  \big( \frac{e^{y^{(t+1)}_c}}{\tsum_j e^{y^{(t+1)}_j}} - \delta_{c,\ell_i}\big) \v a_{L-1}^{(t+1)}. \nonumber
\end{align}
Hence, if $c\neq \ell_i$, then both $g_c^{(t)}<0$ and $g_c^{(t+1)}<0$ so that we cannot use the sign information of $g_c$ to recover the correct label. Therefore, the adversary needs to consider all labels to find out the best fit, which inevitably increases the computation overhead and degrades the fidelity of the reconstructed inputs (see Section \ref{ssubsec:graDif} for numerical validations). 
\end{remark}

\begin{remark}\label{rm.graVh1}
\textbf{Bigger component size $|\mathcal{V}_{h,1}|$ in the   gradient sum \cref{eq.graVh1} will make it more challenging to invert inputs.} This is due to the fact that \cref{eq.graVh1} is related to all data samples of all honest nodes in $\mathcal{V}_{h,1}$, thus inverting input from \cref{eq.propInv2}  is analogous to increase the batchsize or the total number of data samples in \cref{eq.traInv} of the centralized case. It has been empirically shown in many works~\cite{zhu2019deep,geiping2020inverting,yin2021see} that increasing batchsize will degrade the fidelity of reconstructed inputs severely. We will validate this result in  Section \ref{ssubsec:graVh1}.
\end{remark}

\subsection{Optimum attack strategy}\label{subsec:opt}
To test the performance of gradient inversion attack, 
we consider a random geometric graph with $n=50$ nodes. Each node randomly selects $n_i$ data samples from the corresponding dataset and uses a two-layer multilayer perceptron (MLP) to train the local model. For simplicity, in the following experiments, we set $n_i=2$, unless otherwise specified. We use the approach proposed in DLG~\cite{zhu2019deep} to conduct the gradient inversion attack  (see Appendix \ref{app.res} for results of using the cosine similarity-based approach proposed in \cite{geiping2020inverting}).
\begin{figure}[t]
    \centering
    \includegraphics[width=0.3\textwidth]{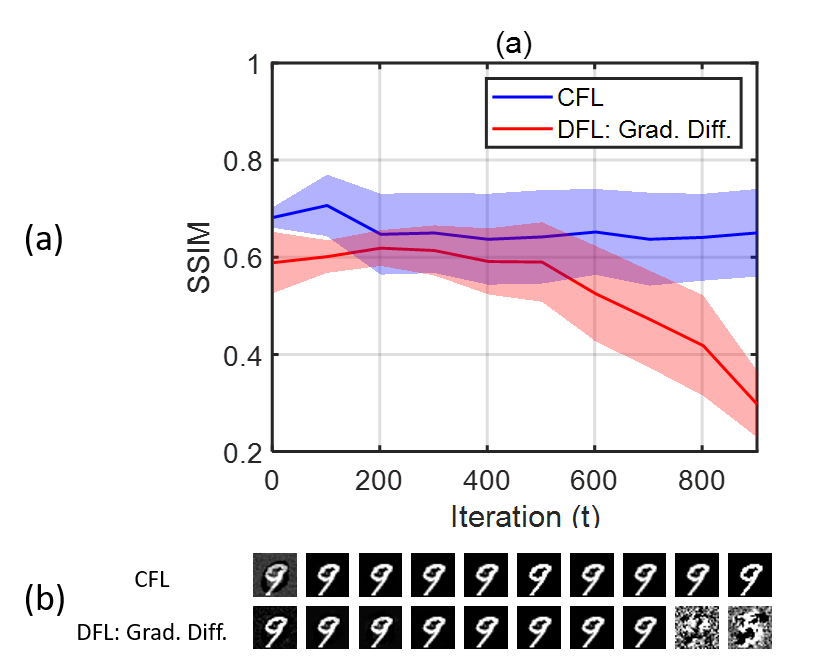}
    \vskip -6pt
    \caption{Performance of reconstructed inputs via inverting gradients (CFL) and gradient differences (DFL)  in terms of iterations $t$: (a) Averaged SSIM (solid lines) of all reconstructed inputs along with the corresponding standard derivation (shadows), (b) sample examples of reconstructed inputs at iteration number $t=1, 100,\ldots, 900$.}
    \label{fig:iter_ssim_diff}
\end{figure}

\begin{figure*}[t]
    \centering
    \includegraphics[width=0.8\textwidth]{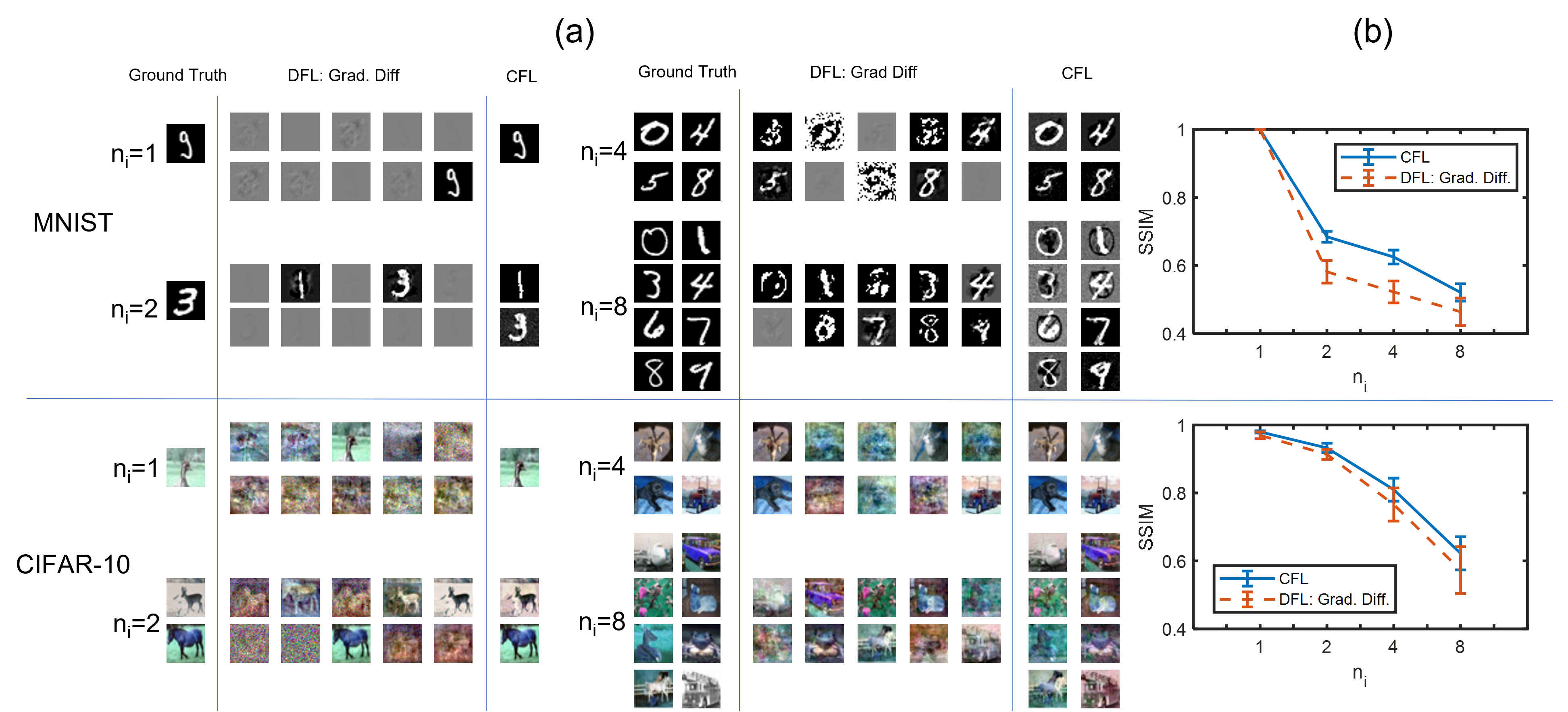}
    \vskip -8pt
    \caption{Performance comparisons of CFL and DFL via inverting inputs from gradient differences: (a) Samples images of ground truth and reconstructed inputs, (b) SSIM comparisons of all reconstructed inputs for different batch size $n_i=1,2,4,8$  using two datasets MNIST (top) and CIFAR-10 (bottom), respectively.}
    \label{fig:ni}
    \vskip -10pt
\end{figure*}

\begin{figure*}[t]
\centering
\includegraphics[width=.8\textwidth]{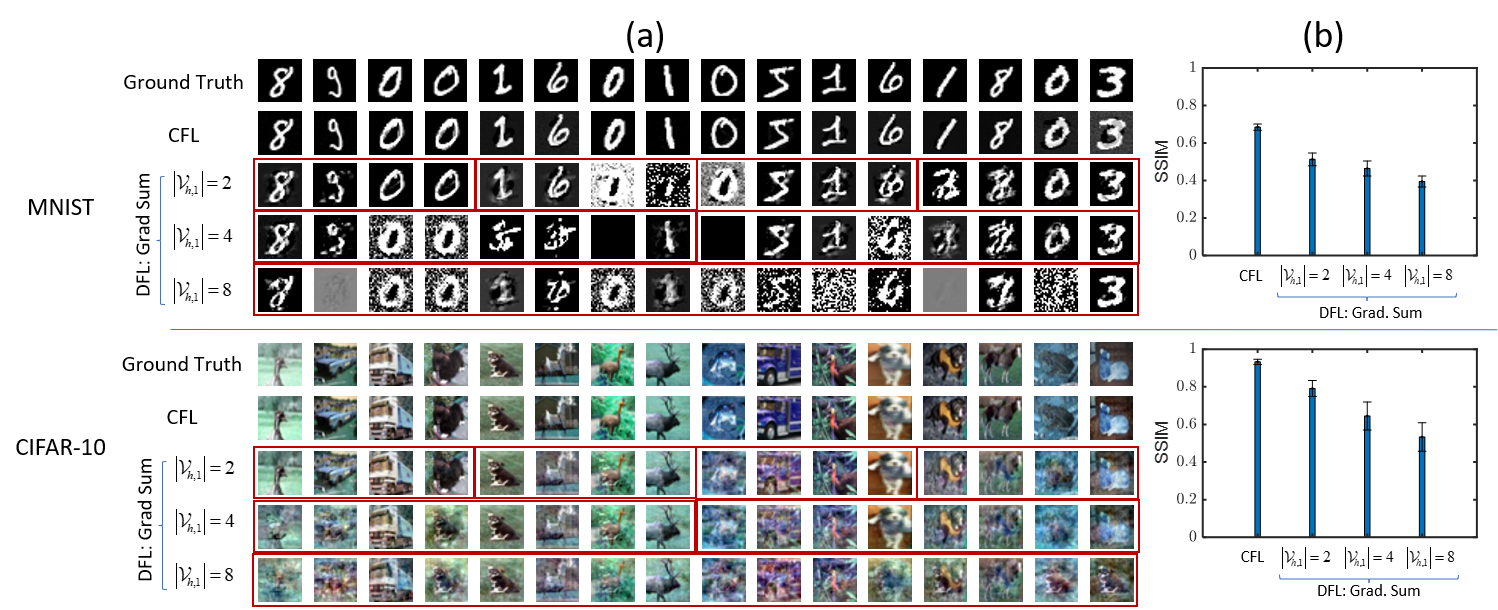}
\vskip -8pt
 \centering
 \mbox{}\vspace*{-.8\baselineskip}
\caption{Performance comparisons of CFL and DFL via inverting inputs from the gradient sum: (a) Sample images of ground truth and reconstructed inputs, (b) SSIM of all reconstructed samples for three different sizes of honest component $|\mathcal{V}_{h,1}|=2,~4,~8$ using two datasets MNIST (top) and CIFAR-10 (bottom), respectively. Wherein the red box indicates that the corresponding samples are from the same component.}
\label{fig:mi}
\vskip -6pt
\end{figure*}

\vspace{1mm}
\noindent\textbf{Noisy gradients vs. gradient differences vs. the gradient sum} 
As highlighted in Remark \ref{rm.graNoisy}, the noisy gradient term vanishes in the lower bound \cref{eq.miLower} if the variance $\sigma^2_Z\rightarrow \infty$. This raises an important question: in practice how large should $\sigma^2_Z$ be to ensure that the effectiveness of reconstructing inputs from noisy gradients is inferior to that of other variables, namely gradient differences and the gradient sum. To investigate this, \autoref{fig:z_std}(a) presents a comparative analysis of input reconstruction performances using noisy gradients, gradient differences, and the gradient sum (considering two honest nodes in the component, i.e., $|\mathcal{V}_{h,1}|=2$), along with the test accuracies of learn models for different choices using the MNIST dataset.   The results indicate that, without loss of test performance, the effectiveness of inverting inputs from noisy gradient degrades very fast as $\sigma^2_Z$ increases, performing worse than inverting inputs using gradient differences and the gradient sum even at a very low variance of $\sigma^2_Z=10^{-5}$. This is further illustrated in plot (b) where the ground truth digit $8$ can hardly be recognized when inverting noisy gradient for variance $\sigma^2_Z\geq 10^{-5}$. 
This suggests that in practical scenarios,  a small variance $\sigma^2_Z$ is sufficient to ensure the lower bound is attached. As a consequence,  in what follows we will evaluate the reconstruction performances via inverting gradient differences and the gradient sum for the case of DFL.

\vspace{1mm}
\noindent\textbf{Optimum attack iteration}
Given that both CFL and DFL protocols are iterative processes consisting of numerous iterations, in principle the gradient inversion attack can deploy all iterations' information for inverting the input samples. To identify the most effective attack strategy for such an attack, we first explore which iteration yields the most effective results for the gradient inversion attack. In \autoref{fig:iter_ssim_diff}(a) we demonstrate the attack performance, quantified by the averaged SSIM  of all reconstructed samples (illustrated by solid lines) along with their standard derivation (shown as shadows)
of CFL and DFL using gradient differences as a function of iteration number $t$ using the MNIST dataset. We can see that the SSIM of the centralized case consistently surpasses those in the decentralized case throughout all iterations. Unlike the centralized case where the reconstruction performance remains relatively stable across iterations, the reconstruction performances of the decentralized case degrade significantly in later iterations. This trend aligns with the expectation that gradient differences will approach zero at convergence thereby containing very little information. 

To visualize this phenomenon, \autoref{fig:iter_ssim_diff}(b) 
displays illustrative examples of both cases at every 100 iterations. The SSIM results are in agreement with these visual examples; for instance, in the later iterations of the decentralized case,  the ground truth digit $9$ can hardly be recognized. Hence, the effectiveness of gradient inversion attacks is more pronounced in the initial iterations. 
Note that for both CFL and DFL we observe a slightly better performance at the early stage. This might be due to the fact that at early iterations the local models contain more information about the local datasets while at later iterations the models are more fitted to the global dataset. For this reason, for the forthcoming Figure \ref{fig:ni} and \ref{fig:mi}, we will focus on exploiting gradients from early iterations to execute the gradient inversion attacks.

\section{Deep neural networks II: privacy gap}\label{sec:DNNII}
We now evaluate the privacy gap between CFL and DFL using different settings and privacy attacks.

\subsection{Evaluating the privacy gap between CFL and DFL using gradient inversion attack}
\vspace{1mm}
\noindent\textbf{Inverting gradient differences \cref{eq.graDif_i}} \label{ssubsec:graDif}
In Remark \ref{rm.graDif} we showed that using gradient differences labels cannot be analytically computed, unlike in CFL.  To assess the impacts of this on the performance of input reconstructions, \autoref{fig:ni}(a) showcases examples of reconstructed inputs for both the MNIST and CIFAR-10 dataset for four different batch-sizes, i.e., $n_i=1,2,4,8$. Notable disparities exist in the reconstructed samples of CFL and DFL cases.  More specifically, compared to the centralized case, reconstructing inputs via gradient differences is less efficient and yields lower quality.  The inefficiency is due to the fact that the adversary needs to iterate through all possible labels to identify the best fit. Furthermore, the quality is compromised due to the inherently reduced information in gradient differences compared to the full gradients. 
To evaluate the quality of reconstructed inputs, we chose the best-fit samples among iterated inputs for each batch and computed their averaged SSIM. The comparison results are presented in  \autoref{fig:ni}(b), illustrating the quality of reconstructed inputs of CFL are consistently better than that in DFL for all batchsizes $n_i$. This comparison underlines the inherent challenges in reconstructing high-quality inputs from gradient differences in DFL.

\vspace{1mm}
\noindent\textbf{Inverting the gradient sum \cref{eq.graVh1}} \label{ssubsec:graVh1}
When inverting inputs from the gradient sum, it is intuitive that the accuracy of reconstructed inputs tends to diminish as the size of the honest component $|\mathcal{V}_{h,1}|$ increases, analogous to the situation of increasing batchsize. 
This relationship is clearly illustrated in \autoref{fig:mi} for the MNIST dataset, showing a direct correlation between the size of the honest component and the precision of the reconstructed inputs. 
It is important to note that while obtaining label information becomes more challenging in DFL (given the gradient sum),  in this comparison we assume that the label information is known a prior for both CFL and DFL. 
However, even with this assumption, the reconstruction performances of DFL are still notably inferior to that of CFL.

\vspace{1mm}
\noindent\textbf{Combining both gradient differences and the gradient sum} 
Since both gradient differences and the gradient sum are available in DFL, one natural question to ask is if combining them together will improve the attack performances. 
We demonstrate the comparison results in \autoref{fig:comb} in Appendix \ref{app.res},  showing the SSIM of reconstructed inputs of CFL and DFL across varying iteration numbers for three different sizes. The results suggest that combining gradient differences and gradient sum does not amplify the reconstruction performances. Still, the attack performances of CFL are in general better than DFL, consistently across all iterations. 

\vspace{1mm}
\subsection{Evaluating privacy gap with additional gradient inversion attacks, model architectures and datasets} 
In addition to the MNIST and CIFAR-10 dataset and MLP network, we further evaluated the privacy gap using two additional datasets (CIFAR-100 and Tiny ImageNet), two additional model architectures (VGG-11 and AlexNet), and two additional gradient inversion attacks: the cosine similarity-based method proposed in \cite{geiping2020inverting} (see \autoref{fig:vgg}, \autoref{fig:graddiff_cos} and \autoref{fig:gradsum_cos}) and the generative model-based method proposed in \cite{jeon2021gradient} (see \autoref{fig:generative}).   
The results indicate that the performance of DFL is generally lower than that of CFL, which aligns with the findings using DLG \cite{zhu2019deep}, as shown in \autoref{fig:ni} and \autoref{fig:mi}. These results corroborate our theoretical analysis and suggest that the gap between CFL and DFL is consistent across different gradient inversion attacks, datasets, and model architectures.

\subsection{Evaluating privacy gap using  membership inference attacks} \label{subsec.mia}
Apart from gradient inversion attacks, it is also interesting to see if there is a privacy gap between CFL and DFL when evaluated using membership inference attacks (MIAs). The attack results and detailed experimental settings are demonstrated in Figure \ref{fig:roc_gradient}. Similar to the case of using gradient inversion attacks, we can see that CFL leaks more membership information compared to DFL. 

\begin{figure}[ht]
    \centering
\includegraphics[width=0.25\textwidth]{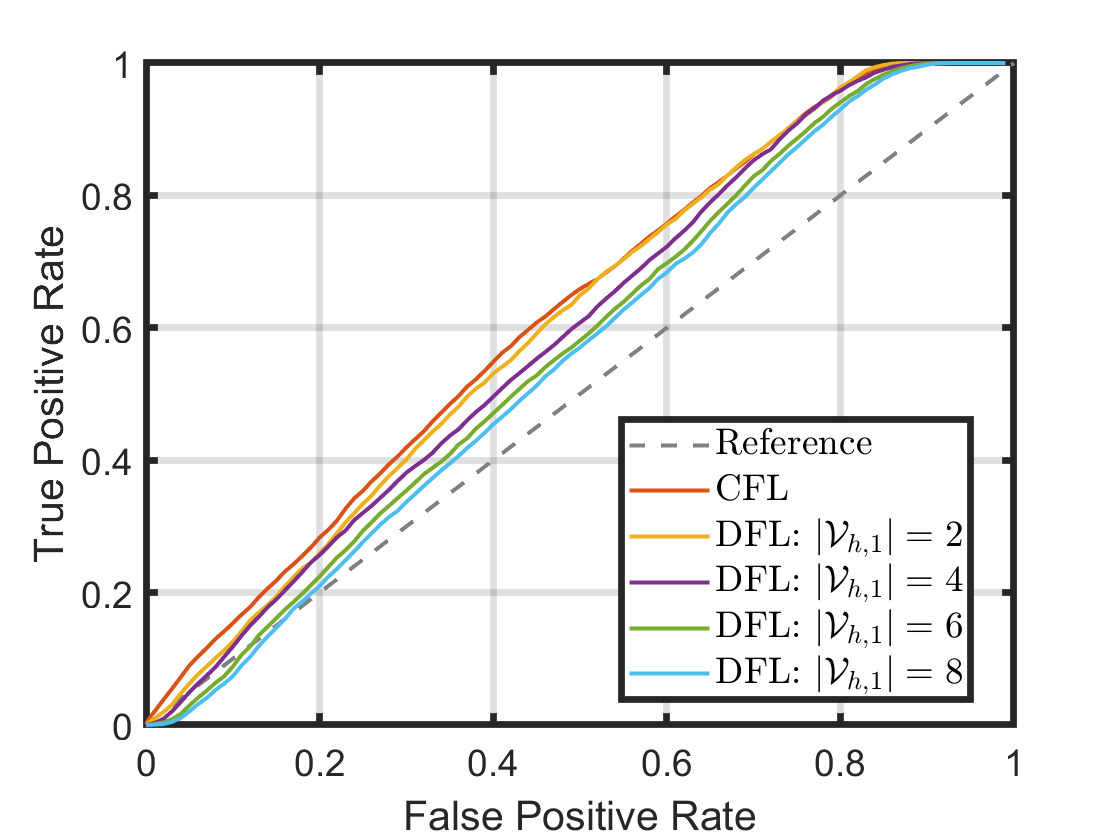}
    \vskip -6pt
    \caption{Membership inference attack results comparisons of CFL and DFL with gradient sum using the gradient-based approach \cite{li2022effective}}
    \label{fig:roc_gradient}
\end{figure}

\begin{figure}[t]
    \centering
    \includegraphics[width=0.3\textwidth]{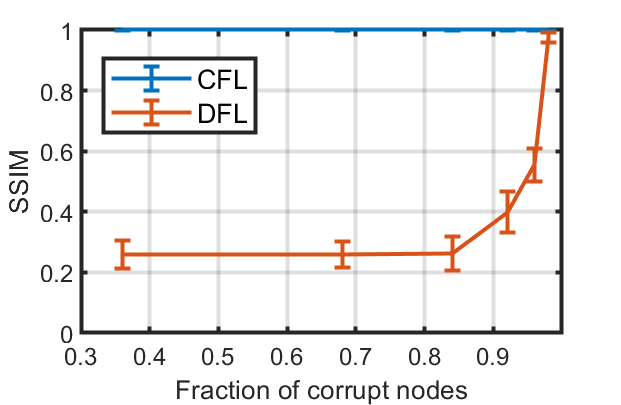}
    \vskip -6pt
    \caption{Performance comparisons of CFL and DFL in terms of the portion of corrupted nodes. } 
    \label{fig:susscess_rate}
\end{figure}

\subsection{Privacy gap reduces with more corrupt nodes}\label{subsec.mia2}
We now validate the result presented in Remark \ref{rm.dpcase}. As shown in Figure \ref{fig:susscess_rate} when using MNIST datasets, the privacy gap between CFL and DFL predictably narrows as the number of corrupt nodes increases when using gradient inversion attacks.  Notably, in the case where there is only a single honest node, the attack performance in DFL converges with that of CFL, effectively closing the privacy gap.  As for the case of using membership inference attacks, we observe a similar tendency (see Figure \ref{fig:mia}). Hence, these results are consistent with our theoretical findings in Remark \ref{rm.dpcase}, thereby confirming the validity of our analytical approach. 

\begin{figure}[ht]
    \centering
\includegraphics[width=0.3\textwidth]{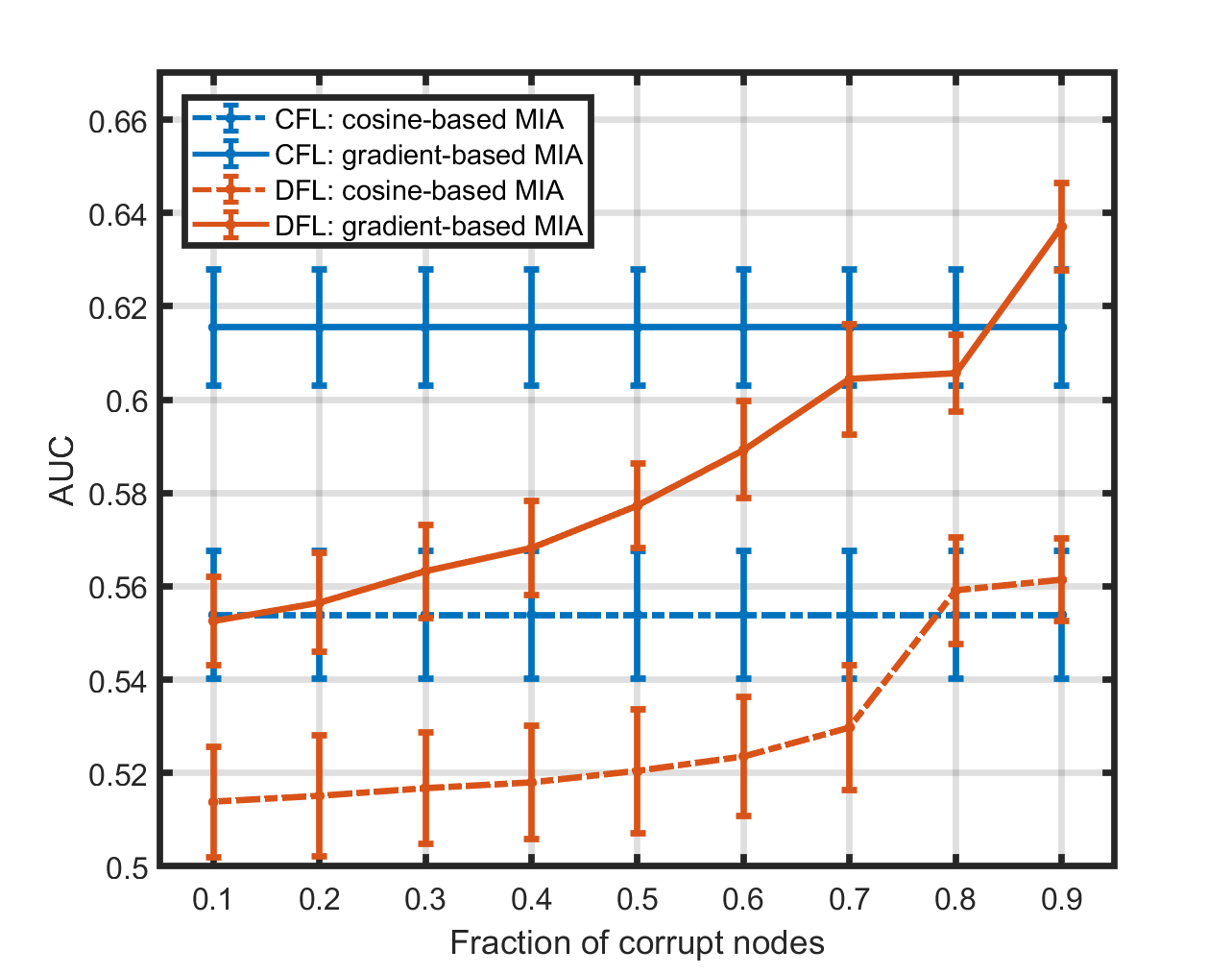}
\vskip -6pt
    \caption{Performance comparisons of CFL and DFL in terms of the portion of
corrupted nodes via membership inference attacks.}
    \label{fig:mia}
\end{figure}

Overall,  we conclude that compared to the CFL case,  the 
DFL protocol is less vulnerable to privacy attacks including gradient inversion attacks and membership inference attacks.

\subsection{Related work} \label{ssec.epfl}
The recent work~\cite{pasquini2022privacy} highlights that prior comparisons between the privacy implications of CFL and DFL either lack empirical evidence or fail to comprehensively explore privacy arguments. Consequently, ~\cite{pasquini2022privacy} stands as the only direct and relevant benchmark for our analysis, which contends that DFL offers no privacy advantages over CFL. However, our study provides a new perspective and challenges this conclusion. We hypothesize that these discrepancies may be attributed to the following factors:
\begin{enumerate}
    \item \textbf{Decentralization Techniques:} The decentralization techniques employed are significantly different. While ~\cite{pasquini2022privacy} utilizes average consensus-based decentralization methods (refer to Section \ref{sssec:avgDFL}), our study focuses on distributed optimization techniques (see Section \ref{sssec:optDFL}). The former approach separates local and global model updates similar to CFL, whereas the latter integrates these updates into a joint optimization process.
    \item \textbf{Threat Models:} ~\cite{pasquini2022privacy} examines both passive and active adversary models. In contrast, our study considers eavesdropping and passive adversary models, offering a different perspective on potential security risks.
    \item \textbf{Theoretical and Empirical Results:} While ~\cite{pasquini2022privacy} confines its findings to empirical data, primarily evaluating privacy risks through membership inference attacks, our research provides a comprehensive analysis includes both information-theoretical analysis and empirical validations. These validations encompass gradient inversion attacks as well as membership inference attacks.
\end{enumerate}
Given these significant methodological divergences, direct comparisons between our findings and those of ~\cite{pasquini2022privacy} are inherently challenging. This highlights the necessity for more detailed and extensive investigations. We advocate for continued research in this domain to unravel the complex dynamics that contribute to these divergent outcomes and to deepen our understanding of privacy mechanisms within FL frameworks.

\section{Conclusions}
\label{sec:conclusion}
In this paper, we showed that DFL through distributed optimization inherently provides privacy advantages compared to CFL, particularly in complex settings like neural networks.  We conducted a detailed analysis of the information flow within the network across various iterations, establishing both upper and lower bounds for the information loss. The bounds indicate that privacy leakage in DFL is consistently less than or equal to that in CFL. We further exemplified our results through two standard applications: logistic regression and training a DNN.  In the case of simple logistic regression, we observed that the privacy leakage in both CFL and DFL are identical. However, in more complex scenarios like training DNNs, the privacy loss as measured by the gradient inversion attack is markedly higher in CFL than in DFL. As expected, the privacy gap between CFL and DFL is more pronounced in the presence of numerous honest nodes.   Extensive experimental results substantiated our findings.

\bibliography{refer.bib}
\bibliographystyle{ieeetr}

\appendices
\section{Proof of Proposition ~\ref{prop.inter}}\label{pf.prop}
With \cref{eq.dZ},  the optimality condition \cref{eq.partialZero} can be expressed as $\forall i \in \mathcal{V}_h$:
\begin{align}\label{eq.parZeroPas}
&-(\tsum_{j \in \mathcal{N}_{i,c}} \v B_{i|j} \v z_{i|j}^{(t)} +\tsum_{j \in \mathcal{N}_{i,h}} \v B_{i|j} \tsum_{\tau=1}^{t}\Delta \v z_{i|j}^{(\tau)} +\rho d_i  \v w_i^{(t)})\nonumber \\
&= \nabla f_i(\v w_i^{(t)}) + \tsum_{j \in \mathcal{N}_{i,h}} \v B_{i|j} \v z_{i|j}^{(0)}.
\end{align}
Since all terms in the LHS of \cref{eq.parZeroPas} are known by the adversary, the noisy gradient (RHS of \cref{eq.parZeroPas}) is known by the adversary, proving claim $i)$. 
As for \cref{eq.graDif_i}, since the noise term the RHS of \cref{eq.parZeroPas} does not depend on $t$, the difference
\begin{align*} 
\nabla f(\v w_i^{(t+\ell)})-\nabla f(\v w_i^{(t)}),
\end{align*}
is known for any $\ell \geq 1$, hence proving claim $ii)$. Moreover,
by summing up the RHS of  \cref{eq.parZeroPas} over all honest nodes in $\mathcal{V}_{h,1}$, we have
\begin{align}
    &\tsum_{j\in \mathcal{V}_{h,1}} \left(\nabla f_j(\v w_j^{(t)}) + \tsum_{k \in \mathcal{N}_{j,h}} \v B_{j|k} \v z_{j|k}^{(0)}\right)\nonumber \\
    &=\tsum_{j\in \mathcal{V}_{h,1}} \nabla f_j(\v w_j^{(t)})+\tsum_{(j,k)\in \mathcal{E}_{h,1}}\v B_{j|k}( \v z_{j|k}^{(0)}-\v z_{k|j}^{(0)}) \label{eq.z0Partial},
\end{align}
since $\v B_{k|j} = -\v B_{j|k}$. 

The difference between two successive $\v z$ updates of \cref{eq.zupNQ} is given by
\begin{align}\label{eq:zt}
  \Delta {\v z}_{j|i}^{(t+1)}-\Delta {\v z}_{i|j}^{(t)}=2\rho \v B_{i|j} (\v w_i^{(t)}-\v w_i^{(t-1)}).
\end{align}
Hence, given the fact that at convergence we have $\v w_j^{(t_{\max})} = \v w_k^{(t_{\max})}$
for all $(j,k)\in {\cal E}$, the adversary has knowledge of all $\v w_j^{(t)}, j\in {\cal V}, t \in {\cal T}$. Moreover, again from \cref{eq.zupNQ}, we have
\begin{align}
 \Delta {\v z}_{j|i}^{(t+1)} &= \v z_{i|j}^{(t)} - \v z_{j|i}^{(t)} +    2\rho \v B_{i|j} \v w_i^{(t)} \nonumber \\
 &= \sum_{\tau=1}^t \left( \Delta {\v z}_{i|j}^{(\tau)} - \Delta {\v z}_{j|i}^{(\tau)}  \right) + \v z_{i|j}^{(0)} - \v z_{j|i}^{(0)},
 \label{eq:zdif}
\end{align}
showing that knowing the $\Delta {\v z}_{i|j}^{(t+1)} $s is equivalent to knowing the differences $\v z_{i|j}^{(0)} - \v z_{j|i}^{(0)}$ for all $(i,j)\in {\cal E}$.

Hence, the last term in the RHS of \cref{eq.z0Partial} is known to the adversary, the sum of gradients is thus known which completes the proof of claim $iii)$. 

\section{Proof of Theorem~\ref{thm_pas2}}\label{pf.thm_pas2}
We have
\begin{align}
&I( X_i; \{X_j\}_{j\in \mathcal{V}_c},\{ Z_{j \mid k}^{(0)}\}_{(j, k) \in \mathcal{E}_c},\{\Delta Z_{j \mid k}^{(t+1)}\}_{(j, k) \in \mathcal{E},t\in \mathcal{T}}) \nonumber\\
&\stackrel{(a)}{=}I\big
( X_i; \{X_j\}_{j\in \mathcal{V}_c},\{ Z_{j \mid k}^{(0)}\}_{(j, k) \in \mathcal{E}_c},\{W_j^{(t)},\big.\nonumber\\
  &\quad\big. Z_{j|k}^{(0)}-Z_{k|j}^{(0)}\}_{(j,k)\in \mathcal{E}, t\in \mathcal{T}}\big) \nonumber\\
&\stackrel{(b)}{=}I\big
( X_i; \{X_j\}_{j\in \mathcal{V}_c},\{ Z_{j \mid k}^{(0)}\}_{(j, k) \in \mathcal{E}_c},\{W_j^{(t)},
\big.\nonumber\\
  &\quad\big.\nabla f_j(W_j^{(t)}) + \tsum_{k \in \mathcal{N}_{j}} \v B_{j|k} Z_{j|k}^{(0)},Z_{j|k}^{(0)}-Z_{k|j}^{(0)}\}_{(j,k)\in \mathcal{E}, t\in \mathcal{T}}\big) \nonumber\\
  &\stackrel{(c)}{=}I\big( X_i; \{X_j\}_{j\in \mathcal{V}_c},\{ Z_{j \mid k}^{(0)}\}_{(j, k) \in \mathcal{E}_c},\{W_j^{(t)}\}_{j\in \mathcal{V}, t\in \mathcal{T}},\big.\nonumber\\
  &\quad\big.\{\nabla f_j(W_j^{(t)})\}_{j\in \mathcal{V}_c,t\in \mathcal{T}}, \{Z_{j|k}^{(0)}-Z_{k|j}^{(0)}\}_{(j,k)\in\mathcal{E}_h} ,\big.\nonumber\\
  &\quad\big.\{\nabla f_j(W_j^{(t)}) + \tsum_{k \in \mathcal{N}_{j,h}} \v B_{j|k} Z_{j|k}^{(0)}\}_{j\in \mathcal{V}_h,t\in \mathcal{T}}\big) \nonumber
\end{align}
where (a) follows from \cref{eq:zt} and \cref{eq:zdif}, and (b) follows from \cref{eq.parZeroPas}; (c) follows from \cref{eq.parZeroPas} and the fact that all terms in (b)   are sufficient to compute all terms in (c) and vice versa. Hence, the proof of \cref{eq.miDFLp_inter} is now complete.
Clearly, when $\sigma_Z^2=0$, the above (c) reduces to \cref{eq.miCFLp}, thereby showing that the privacy loss in DFL is upper bounded by the loss of CFL.

Before proving the lower bound, we first present the following lemma necessary for the coming proof.
\begin{lemma}\label{lm.linear}
    Let $X_1,\ldots X_n$ and  $R_1,\ldots R_n$ be independent random variables, and let $g(\cdot)$ be an arbitrary function. If they satisfy $\forall i\colon  I (X_i;g(X_i)+R_i)=0$. Then 
      \begin{align*}
   \forall i:  & I (X_i;g(X_1)+R_1,\ldots,g(X_n)+R_n,\tsum_{j=1}^{n} R_j)\\
   &=I(X_i;\tsum_{j=1}^{n} g(X_j) ).
  \end{align*}
\end{lemma}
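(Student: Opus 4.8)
\textbf{Proof plan for Lemma~\ref{lm.linear}.}
The plan is to show mutual inclusion of the two ``information sets'': knowing $\{g(X_j)+R_j\}_{j=1}^n$ together with $\tsum_j R_j$ is informationally equivalent (as far as $X_i$ is concerned) to knowing $\tsum_j g(X_j)$. First I would establish that the right-hand side is dominated by the left: since $\tsum_j (g(X_j)+R_j) - \tsum_j R_j = \tsum_j g(X_j)$, the variable $\tsum_j g(X_j)$ is a deterministic function of the left-hand collection, so by the data-processing inequality $I(X_i;\{g(X_j)+R_j\}_j,\tsum_j R_j)\ge I(X_i;\tsum_j g(X_j))$. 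The substance of the lemma is the reverse inequality, i.e.\ that the ``extra'' variables $g(X_j)+R_j$ (and $\tsum_j R_j$) carry no additional information about $X_i$ once $\tsum_j g(X_j)$ is known.

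For the reverse direction I would argue at the level of conditional independence. Write $S=\tsum_j g(X_j)$ and $V=(g(X_1)+R_1,\ldots,g(X_n)+R_n,\tsum_j R_j)$; note $\tsum_j R_j$ is redundant in $V$ given the other $n$ coordinates and $S$, so it suffices to handle $V'=(g(X_1)+R_1,\ldots,g(X_n)+R_n)$. The chain rule gives
\begin{align*}
I(X_i;V',S) = I(X_i;S) + I(X_i;V'\mid S),
\end{align*}
so the claim reduces to showing $I(X_i;V'\mid S)=0$, i.e.\ $X_i \perp V' \mid S$. The key structural fact to exploit is that the hypothesis $I(X_j;g(X_j)+R_j)=0$ for each $j$, combined with independence of all the $X_j$ and $R_j$, forces the noise to be ``rich enough'' to mask each $g(X_j)$ individually; I would use this to show that the conditional law of $V'$ given $(X_1,\ldots,X_n)$ depends on $(X_1,\ldots,X_n)$ only through $S$. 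Concretely, $I(X_j;g(X_j)+R_j)=0$ means $g(X_j)+R_j$ has the same distribution for every value of $X_j$; since the $R_j$ are independent of everything, this means the distribution of $(g(X_1)+R_1,\dots,g(X_n)+R_n)$ conditioned on $(X_1,\dots,X_n)=(x_1,\dots,x_n)$ is the convolution of the laws of $g(x_j)+R_j$, each of which is invariant in $x_j$ up to the shift already absorbed — more carefully, one writes $g(X_j)+R_j = S/n + (g(X_j) - S/n + R_j)$ and shows the residual terms have a joint law independent of the individual $x_j$'s given $S$. Then conditioning further on $X_i$ (a coordinate of the $X$-tuple) adds nothing beyond $S$, yielding $X_i\perp V'\mid S$ and hence $I(X_i;V',S)=I(X_i;S)$.

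The main obstacle I anticipate is making the ``noise masks $g(X_j)$ individually, hence masks everything but the sum jointly'' step fully rigorous, because $I(X_j;g(X_j)+R_j)=0$ only controls each coordinate marginally, whereas we need a statement about the joint distribution of all $n$ perturbed coordinates conditioned on $S$. The clean way to do this is: since $g(X_j)+R_j$ is independent of $X_j$ (marginal MI zero plus the definitional equivalence of zero MI and independence), and the pairs $(X_j, R_j)$ are mutually independent across $j$, the whole vector $(g(X_1)+R_1,\dots,g(X_n)+R_n)$ is independent of $(X_1,\dots,X_n)$ — wait, that would be too strong and false in general (the sum $\tsum_j(g(X_j)+R_j)$ correlates with $\tsum_j g(X_j)$). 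So the correct and careful claim is only the conditional independence given $S$, which I would prove by an explicit factorization of the joint density (or, in the discrete case, the joint pmf): condition on $S=s$, and check that $p(v'\mid x_1,\dots,x_n, S=s)$ does not depend on $(x_1,\dots,x_n)$. This factorization, leveraging $p(g(x_j)+R_j = a) $ being independent of $x_j$, is the technical heart; everything else is the chain rule and data-processing inequality.
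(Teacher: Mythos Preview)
Your overall plan is sound, but you talk yourself out of the cleanest argument. The claim you dismiss as ``too strong and false in general'' --- that the full vector $V'=(g(X_1)+R_1,\ldots,g(X_n)+R_n)$ is independent of $(X_1,\ldots,X_n)$ --- is in fact \emph{true} under the lemma's hypotheses, and it finishes the proof in two lines. Your intuition that $\tsum_j(g(X_j)+R_j)$ must correlate with $\tsum_j g(X_j)$ fails here precisely because the hypothesis $I(X_j;g(X_j)+R_j)=0$ is very strong: it forces $P(R_j=v-g(x_j))$ to be constant over all $x_j$ in the support of $X_j$. Since the $R_j$'s are jointly independent of $X$, one has $P(V'=v\mid X=x)=\prod_j P(R_j=v_j-g(x_j))$, and each factor is constant in $x_j$, so the product is constant in $x$; hence $V'\perp X$. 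Then $I(X_i;V',\tsum_j R_j)=I(X_i;V',\tsum_j g(X_j))$ by the bijection you already noted, and the chain rule together with $V'\perp(X_i,\tsum_j g(X_j))$ gives $I(X_i;V'\mid \tsum_j g(X_j))=0$. Your fallback plan (show $X_i\perp V'\mid S$ by factorizing the conditional density) uses exactly the same ingredient --- you even name it at the end --- and would succeed, but the detour through the decomposition $g(X_j)+R_j=S/n+(g(X_j)-S/n+R_j)$ is not helpful and does not lead anywhere as written.

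The paper's proof takes a different route: it applies an explicit invertible upper-triangular linear map to the vector $(g(X_1)+R_1,\ldots,g(X_n)+R_n,-\tsum_j R_j)$, producing $\big(\tsum_j g(X_j),\ \tsum_{j\ge 2}g(X_j)-R_1,\ \ldots,\ -\tsum_j R_j\big)$, and then observes that consecutive entries differ by $g(X_k)+R_k$, which is independent of all the $X_i$'s and of the preceding entries; this yields a Markov chain $X_i\to\tsum_j g(X_j)\to\cdots$, so all the mutual information sits in the first coordinate. Your (corrected) route is arguably more direct, isolating the single unconditional independence $V'\perp X$ rather than building a chain; the paper's route has the merit of making the information-preserving reparametrization completely explicit.
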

\begin{proof}
See Appendix \ref{app.lemma}.
\end{proof}

When $\sigma^2_Z\rightarrow \infty$,  \cref{eq.miDFLp_inter} becomes
\begin{align*}
  &\stackrel{(a)}{=}I\big( X_i; \{X_j\}_{j\in \mathcal{V}_c},\{W_j^{(t)}\}_{j\in \mathcal{V}, t\in \mathcal{T}},\big.\nonumber\\
  &\quad\big.\{\nabla f_j(W_j^{(t)})\}_{j\in \mathcal{V}_c,t\in \mathcal{T}}, \{Z_{j|k}^{(0)}-Z_{k|j}^{(0)}\}_{(j,k)\in\mathcal{E}_h} ,\big.\nonumber\\
  &\quad\big.\{\nabla f_j(W_j^{(t)}) + \tsum_{k \in \mathcal{N}_{j,h}} \v B_{j|k} Z_{j|k}^{(0)}\}_{j\in \mathcal{V}_h,t\in \mathcal{T}}\big)\nonumber \\
    &\stackrel{(b)}{=}I\big( X_i; \{X_j\}_{j\in \mathcal{V}_c},\{W_j^{(t)}\}_{j\in \mathcal{V}, t\in \mathcal{T}},\{\nabla f_j(W_j^{(t)})\}_{j\in \mathcal{V}_c,t\in \mathcal{T}}, \big.\\
&\quad\big. \{\nabla f_j(W_j^{(t+1)})-\nabla f_j(W_j^{(t)})\}_{j\in \mathcal{V}_h,t\in \mathcal{T}}, \big.\nonumber\\
&\quad\big.\{\tsum_{j\in \mathcal{V}_{h,l}}\nabla f_j(W_j^{(t)})\}_{1\leq l \leq k_h, t\in \mathcal{T}}\big) 
\end{align*}
where (a)  uses the fact that $\sigma^2_Z\rightarrow \infty$, i.e., $\{ Z_{j \mid k}^{(0)}\}_{(j, k) \in \mathcal{E}_c}$ is asymptotically independent of all other terms; (b) uses Lemma \ref{lm.linear} and the fact that for all honest components $\mathcal{G}_{h,1}, \ldots, \mathcal{G}_{h,k_h}$ the last term in \cref{eq.z0Partial} is known: the gradient of honest nodes can be seen as the term $g(X_i)$'s  and $Z_{j|k}^{(0)}-Z_{k|j}^{(0)}$'s can be seen as noise $R_i$'s in  Lemma \ref{lm.linear}. Hence, proof of lower bound is complete.



\section{Proof of Lemma 1}\label{app.lemma}
We first present the following equality:
\begin{align} 
   & \begin{bmatrix}
        1 & 1 & \cdots & 1 & 1\\
        0 & 1 & \cdots & 1 & 1 \\
        \vdots & \ddots & \ddots & \vdots & \vdots\\
        0 & \cdots & 0 & 1 &1\\
        0 & \cdots & \cdots & 0 & 1
    \end{bmatrix}
    \begin{bmatrix}
        g(X_1)+R_1\\
        g(X_2)+R_2\\
        \vdots\\
        g(X_n)+R_n\\
        -\tsum_{i=1}^n R_i
    \end{bmatrix} \label{eq:lnMap}\\
    \quad&=
    \begin{bmatrix}
        \tsum_{i=1}^n g(X_i)\\
        \tsum_{i=2}^n g(X_i)- R_1\\
        \vdots\\
        X_n-\tsum_{i=1}^{n-1} R_i\\
        -\tsum_{i=1}^n R_i \nonumber
    \end{bmatrix} .
\end{align}
Thus   
 \begin{align*}
         & I (X_i;g(X_1)+R_1,\ldots,g(X_n)+R_n,\tsum_{j=1}^{n} R_j)\\
         &\stackrel{(a)} {=} I (X_i;\tsum_{j=1}^{n} g(X_j), \tsum_{j=2}^{n} g(X_j)-R_1, \ldots,g(X_n)\\
         &\quad -\tsum_{j=1}^{n-1} R_j,\tsum_{j=1}^{n} R_j)
         \\
         &=I(X_i;\tsum_{j=1}^{n} g(X_j) ).
    \end{align*}
where (a) follows from the fact that the linear map in \cref{eq:lnMap} is bijective; by observing the linear map we note that the difference of the $k$'th and $(k+1)$'th rows in RHS of \cref{eq:lnMap} is $g(X_k)+R_k$. This difference is independent of all $X_i$s and the $k$'th row of \cref{eq:lnMap}, thus  $X_i\rightarrow \tsum_{j=1}^{n} g(X_j)\rightarrow \tsum_{j=2}^{n} g(X_j)-R_1\rightarrow \ldots\rightarrow g(X_n)-\tsum_{j=1}^{n-1} R_i\rightarrow \tsum_{j=1}^{n} R_j$ forms a Markov chain. Which establishes the second equality, thereby completing the proof.

\section{Proof of Corollary \ref{cor.mi}}\label{pf.cor}
\begin{align*}
  &I(X_i;\mathcal{O}_{\rm {CFL}})-I(X_i;\mathcal{O}_{\rm {DFL}})\\
&\stackrel{(a)}{=}I(X_i;\mathcal{O}_{\rm {CFL}},\mathcal{O}_{\rm {DFL}})-I(X_i;\mathcal{O}_{\rm {DFL}})\\
&\stackrel{(b)}{=}I(X_i;\mathcal{O}_{\rm {CFL}},\mathcal{O}_{\rm {DFL}}|\mathcal{O}_{\rm {DFL}})\\
&=I(X_i;\mathcal{O}_{\rm {CFL}}|\mathcal{O}_{\rm {DFL}})\\
&\stackrel{(c)}{=}I\big( X_i;\{\nabla f_j(W_j^{(t)})\}_{j\in \mathcal{V}_h,t\in \mathcal{T}}|\{X_j\}_{j\in \mathcal{V}_c},\{W_j^{(t)}\}_{j\in \mathcal{V}, t\in \mathcal{T}},\big.\nonumber\\
&\quad\big.\{\nabla f_j(W_j^{(t)})\}_{j\in \mathcal{V}_c,t\in \mathcal{T}}, \{\nabla f_j(W_j^{(t+1)})-\nabla f_j(W_j^{(t)})\}_{j\in \mathcal{V}_h,t\in \mathcal{T}}, \big.\nonumber\\
&\quad\big.\{\tsum_{j\in \mathcal{V}_{h,l}}\nabla f_j(W_j^{(t)})\}_{1\leq l \leq k_h, t\in \mathcal{T}}\big) \\
&\stackrel{(d)}{=}I\big( X_i;\{\nabla f_j(W_j^{(t)})\}_{j\in \mathcal{V}_h,t\in \mathcal{T}}|\{X_j\}_{j\in \mathcal{V}_c},\{W_j^{(t)}\}_{j\in \mathcal{V}, t\in \mathcal{T}},\big.\nonumber\\
&\quad\big. \{\nabla f_j(W_j^{(t+1)})-\nabla f_j(W_j^{(t)})\}_{j\in \mathcal{V}_h,t\in \mathcal{T}}, \big.\nonumber\\
&\quad\big.\{\tsum_{j\in \mathcal{V}_{h,l}}\nabla f_j(W_j^{(t)})\}_{1\leq l \leq k_h, t\in \mathcal{T}}\big) 
\end{align*}
where (a) holds as $I(X_i;\mathcal{O}_{\rm {CFL}})=I(X_i;\mathcal{O}_{\rm {CFL}},\mathcal{O}_{\rm {DFL}})$ since all terms in \cref{eq.miCFLp} of CFL are sufficient to compute all terms in \cref{eq.miLower}
in DFL; (b) and (c) follow from the definition of conditional mutual information; (d) holds as $X_j$ and $W_j^{(t)}$ can determine $\nabla f_j(W_j^{(t)})$;

\section{Convergence behavior}\label{app.conv}
\subsection{Setting of logistic regression}
We assume each node $i$ holds $n_i=1$ data sample $\v x_{ik} \in \mathbb{R}^2$ and binary labels $\ell_{i} \in\{0,1\}$. For two labels, the input training samples are randomly drawn from a unit variance Gaussian distribution having mean ${\bm \mu}_0 = (-1,-1)^\intercal$ ($\ell_{ik}=0$) and mean ${\bm \mu}_1 = (1,1)^\intercal$ ($\ell_{ik}=1$), respectively.  We utilize PDMM for the decentralized protocol, i.e., $\theta=1$ in Algorithm \ref{alg:pdmm},  and the convergence parameter $\rho$ is set as $0.4$. A single-step 
gradient descent with learning rate $\mu=0.1$ is employed for updating \cref{eq.xupNQ}.   With this,  the bound \cref{equ:bound1_lr} becomes
\begin{align}
     &\frac{\partial f_i}{\partial \v w_{i}}^{\!\!\!(t+1)}-\frac{\partial f_i}{\partial \v w_{i}}^{\!\!\!(t)} = \tsum_{k=1}^{n_{i}}\big( \frac{1}{1+e^{-y^{(t+1)}_{ik}}} - \frac{1}{1+e^{-y^{(t)}_{ik}}}\big) \v x_{ik} \nonumber \\
     &= -\tsum_{j \in \mathcal{N}_i} \v B_{i|j} \Delta \v z_{w,i|j}^{(t)}+\frac{1}{\mu}\big(\v w_i^{(t)}-\v w_i^{(t+1)}\big)\nonumber\\
     &\quad+\big(\rho d_i-\frac{1}{\mu}\big)\big(\v w_i^{(t-1)}-\v w_i^{(t)}\big),
\end{align}
and similar modification follows for \cref{equ:bound2_lr}. As for the CFL protocol, the same learning rate is applied for a fair comparison, i.e., $\mu$ in \cref{eq.w_ave} is also set to $0.4$. 
\subsection{Setting and convergence behavior of DNNs}
To test the performance of the introduced DFL protocol, we generated a random geometric graph with $n=10$ nodes.  LeNet architecture~\cite{lecun1998gradient} is used for training and two datasets, the MNIST~\cite{deng2012mnist} and CIFAR-10 ~\cite{cifar} datasets, are used for evaluation.  MNIST and CIFAR-10 contain 60,000 images of size 28 $\times$ 28 and 50,000 images of size 32 $\times$ 32 from 10 classes, respectively. We randomly split each dataset into 10 folds, allocating each node one fold. For the decentralized protocol, we also use PDMM with the convergence parameter $\rho$ set to $0.4$. We use the quadratic approximation technique~\cite{connor2018function,niwa2020edge} to solve the sub-problems approximately with $\mu=\frac{1}{30}$. For the centralized protocol, the constant $\mu$ in \cref{eq.w_ave} is also set to $\mu=\frac{1}{30}$. In addition, we also test VGG-11 \cite{simonyan2014very} architecture of CIFAR-10 dataset with $n=8$ nodes, where we use inner iteration to solve the sub-problems and set the inner learning rate as $0.05$.
In \autoref{fig:effi} we demonstrate the training loss (in blue) and test accuracy (in red) for both protocols, applied on MNIST and CIFAR-10 datasets, respectively.

\begin{figure*}[ht]
    \centering
    \includegraphics[width=0.75\textwidth]{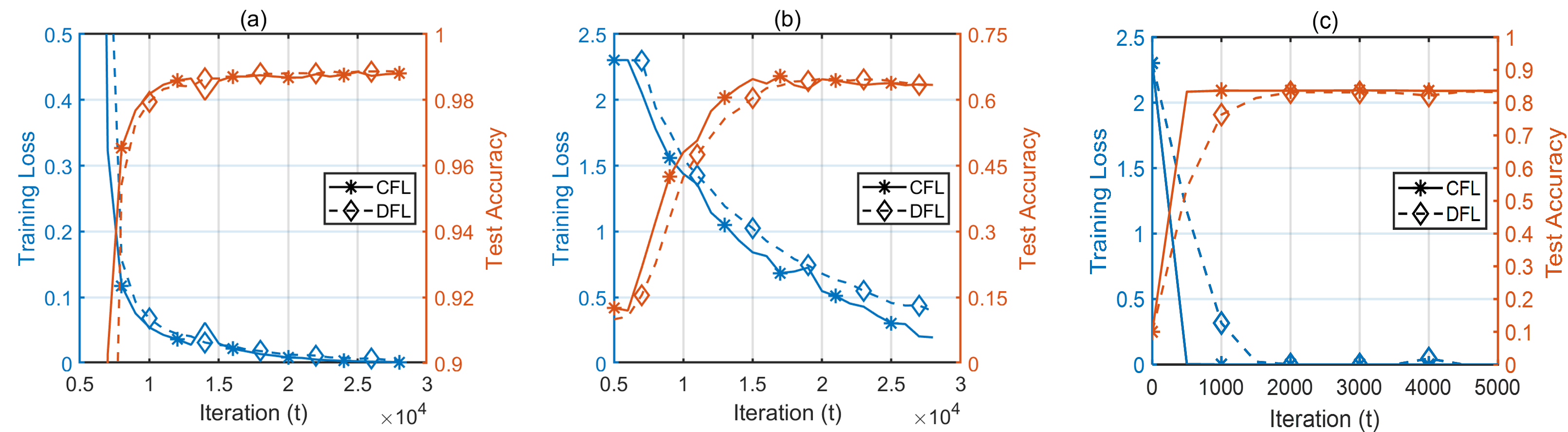}
    \vskip -8pt
    \caption{Training loss and test accuracy in terms of iteration number of both CFL and DFL for two datasets: (a) LeNet (MNIST); (b) LeNet (CIFAR-10); (c) VGG-11 (CIFAR-10).}
    \label{fig:effi}
\end{figure*}

\section{Supplementary attack results}\label{app.res}
\begin{figure*}[ht]
    \centering
    \includegraphics[width=0.8\textwidth]{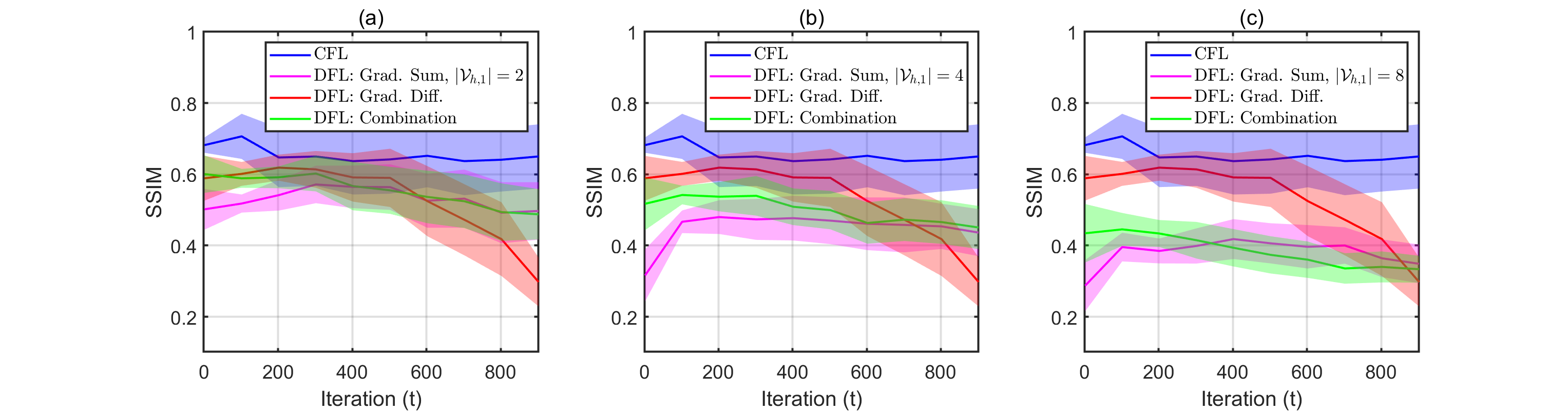}
    \vskip -6pt
    \caption{Performance comparisons of CFL and DFL via a combination strategy using MNIST dataset: SSIM of all reconstructed samples of CFL and DFL by combining gradient differences with the gradient sum for three different sizes of honest component $|\mathcal{V}_{h,1}|=2,4,8$ ((a)-(c), respectively).}
    \label{fig:comb}
\end{figure*}

\begin{figure*}[ht]
    \centering
\includegraphics[width=0.79\textwidth]{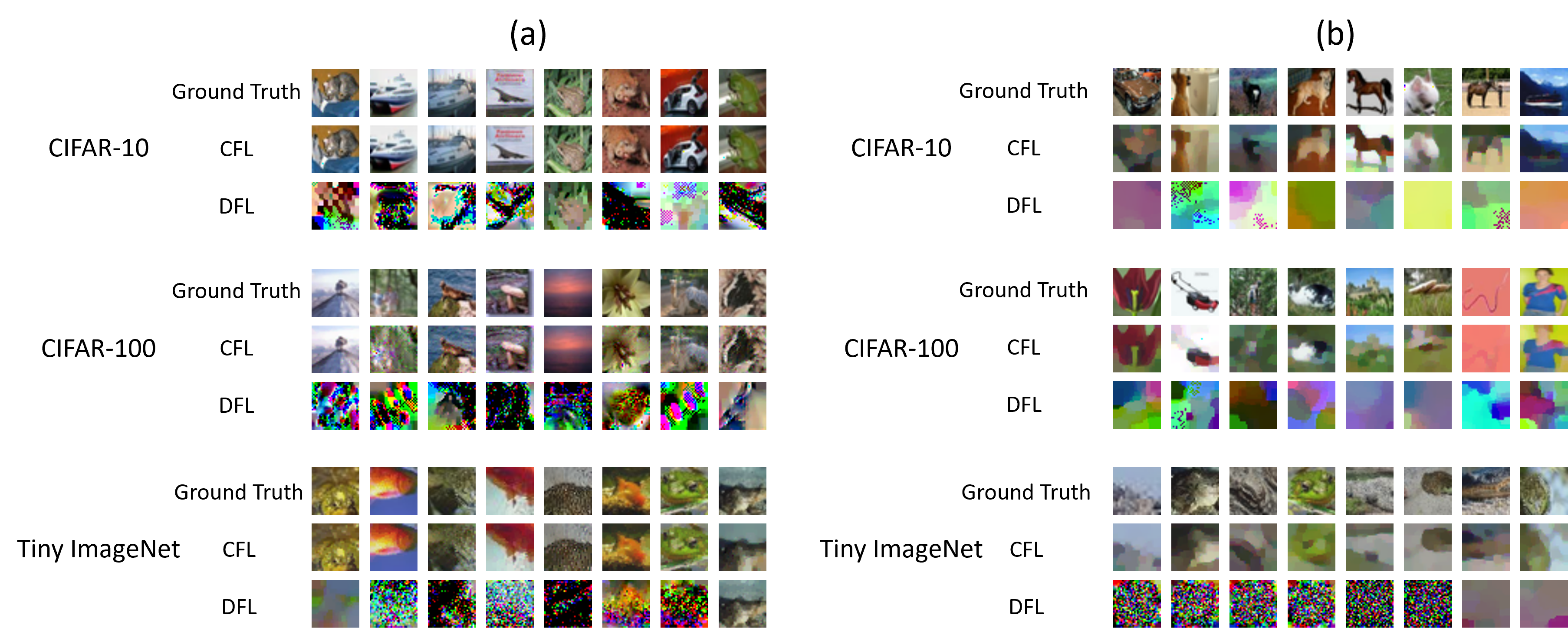}
    \vskip -6pt
    \caption{Reconstructed inputs in CFL and DFL using cosine similarity based gradient inversion attack \cite{geiping2020inverting} with (a)VGG-11 and (b)AlexNet architecture and three datasets (CIFAR-10, CIFAR-100 and tiny ImageNet). Label information in DFL is assumed to be known.}
    \label{fig:vgg}
\end{figure*}

\begin{figure*}[ht]
    \centering
    \includegraphics[width=0.8\textwidth]{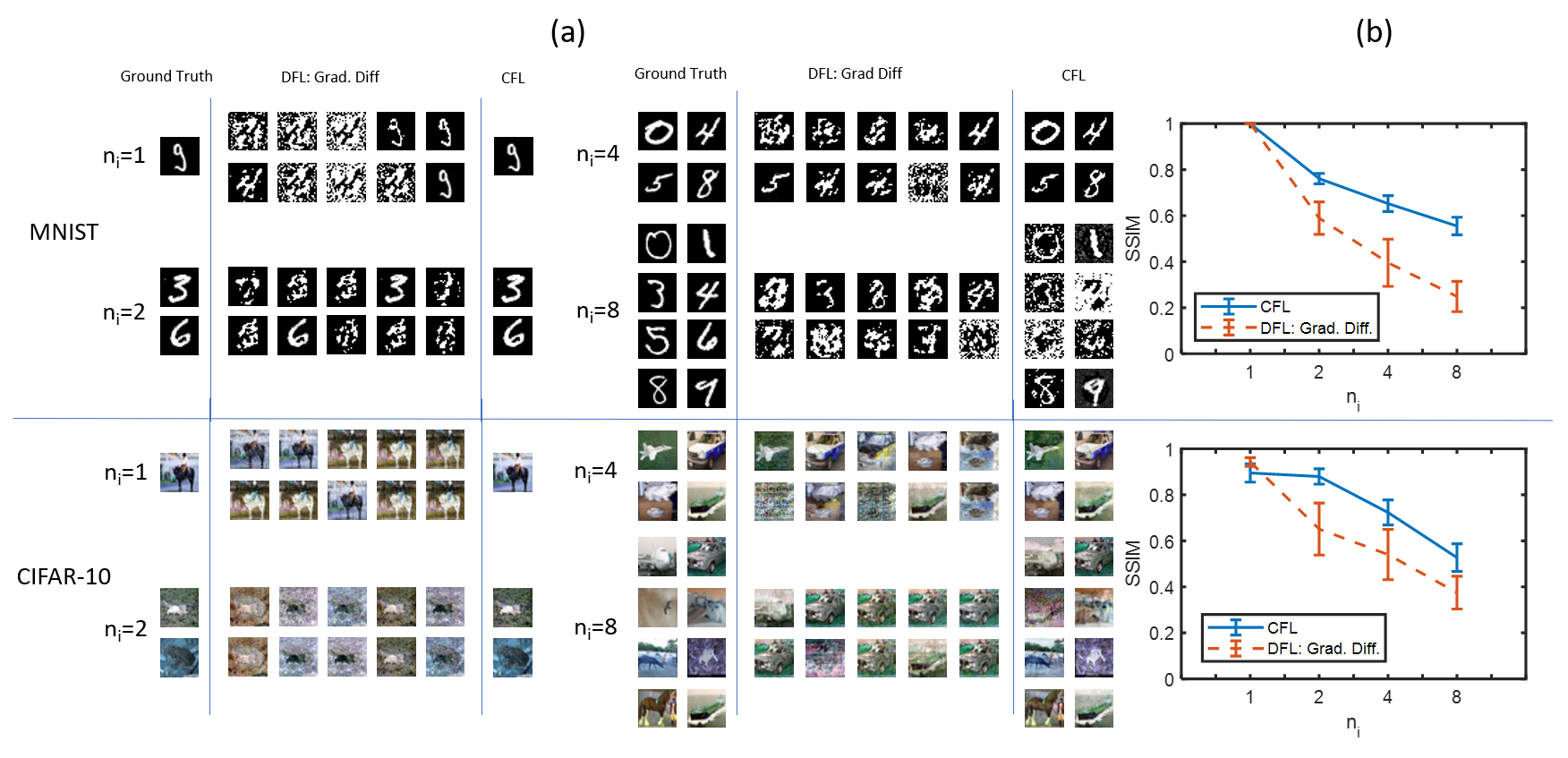}
    \caption{Performance comparisons of CFL and DFL via inverting inputs from gradient differences using cosine similarity based gradient inversion attack \cite{geiping2020inverting}: (a) Samples images of ground truth and reconstructed inputs, (b) SSIM comparisons of all reconstructed inputs for different batch size $n_i=1,2,4,8$, respectively.}
    \label{fig:graddiff_cos}
    \includegraphics[width=0.8\textwidth]{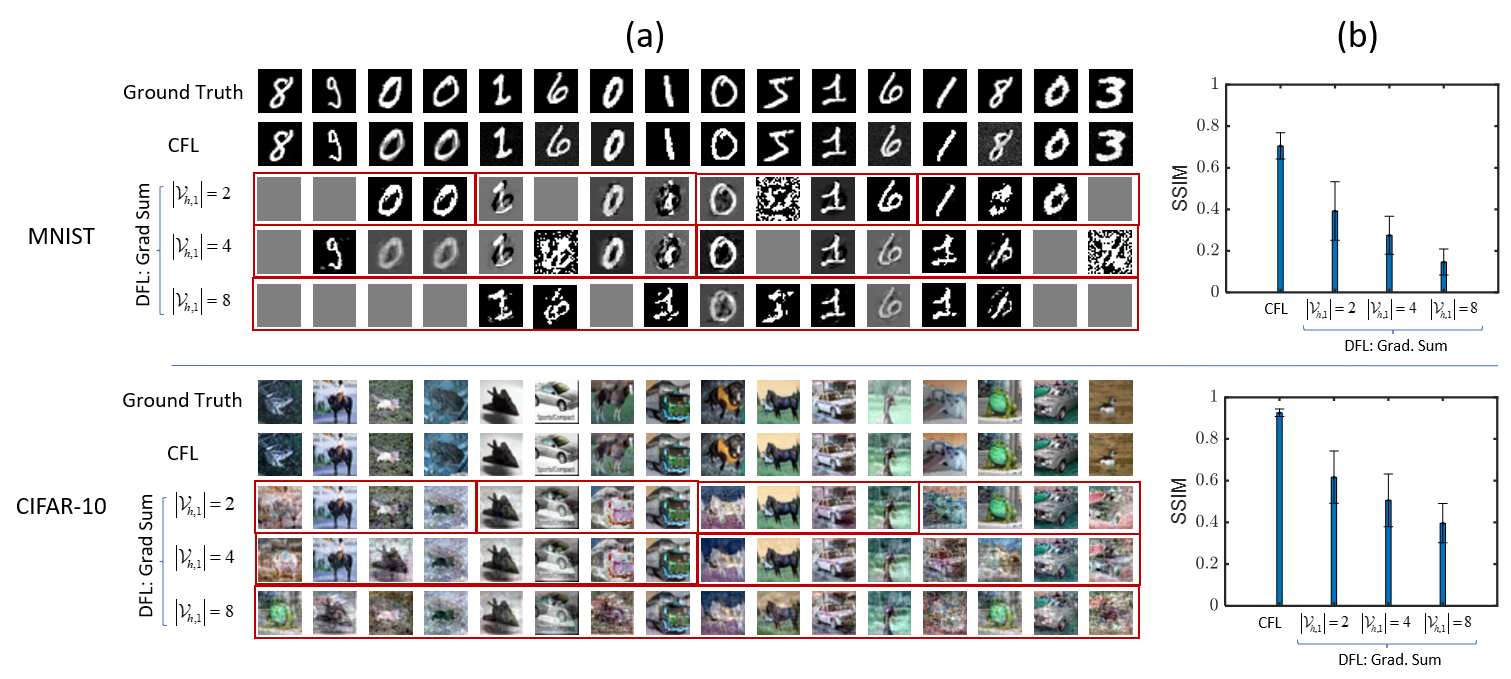}
    \caption{Performance comparisons of CFL and DFL via inverting inputs from gradient sums using cosine similarity based gradient inversion attack \cite{geiping2020inverting}: (a) Samples images of ground truth and reconstructed inputs, (b) SSIM of all reconstructed samples for three different sizes of honest component $|\mathcal{V}_{h,1}|=2,~4,~8$, respectively.}
    \label{fig:gradsum_cos}
\end{figure*}

\begin{figure*}[ht]
    \centering
\includegraphics[width=0.75\textwidth]{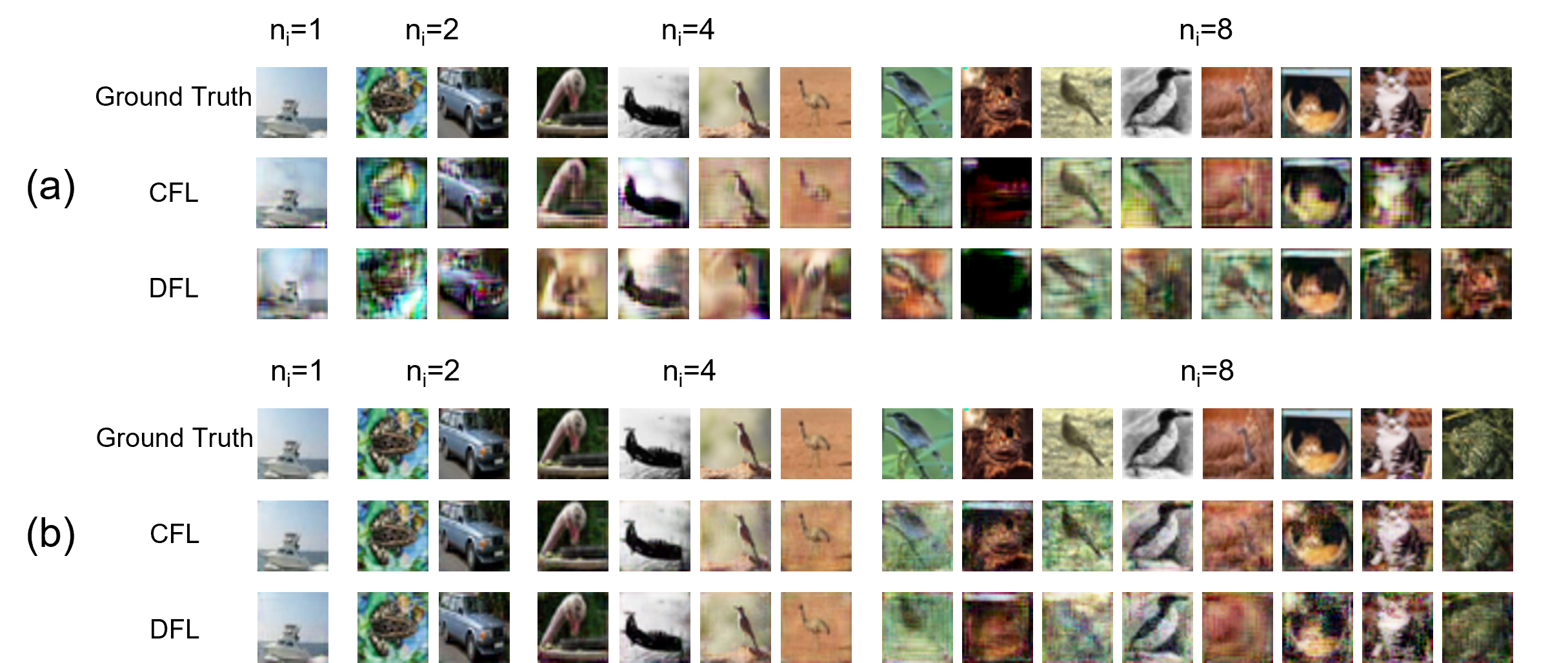}
    \vskip -6pt
    \caption{Reconstructed inputs of CIFAR-10 dataset in CFL and DFL via generative model-based gradient inversion attack \cite{jeon2021gradient} with (a)VGG-11  and (b)AlexNet. Label information in DFL is assumed to be known.}
    \label{fig:generative}
\end{figure*}

\noindent\textbf{Experimental setting of membership inference attack}:

Following similar settings in \cite{li2022effective}, we randomly select 4000 samples in Purchase dataset \cite{shokri2017membership} as the training sets and distribute them to $n=10$ nodes. For our experiments, we adopt the fully connected neural network architecture proposed by \cite{li2022effective}, comprising four perceptron layers: the first layer (FC1) contains 512 neurons, followed by the second layer (FC2) with 256 neurons, the third layer (FC3) with 128 neurons, and the final layer (FC4) with 100 neurons.

\end{document}